\newcommand{\optionaldesc}[2]{%
  \phantomsection
  #1\protected@edef\@currentlabel{#1}\label{#2}%
}
\theoremstyle{plain}
\newtheorem{theorem}{Theorem}[section]
\newtheorem{proposition}[theorem]{Proposition}
\newtheorem{lemma}[theorem]{Lemma}
\theoremstyle{definition}
\newtheorem{definition}[theorem]{Definition}
\newtheorem{example}[theorem]{Example}
\newtheorem{problem}[theorem]{Problem}
\theoremstyle{remark}
\newtheorem{remark}[theorem]{Remark}
\newcommand{\cov}{\text{Cov}}
\newcommand{\cL}{\mathcal{L}}
\newcommand{\cF}{\mathcal{F}}
\newcommand{\cM}{\mathcal{M}}
\newcommand{\cG}{\mathcal{G}}
\newcommand{\cS}{\mathcal{S}}
\newcommand{\cH}{\mathcal{H}}
\newcommand{\eps}{\varepsilon}
\newcommand{\ks}{\text{KS}}
\newcommand{\pa}{\text{pa}}
\newcommand{\anc}{\text{anc}}
\newcommand{\rk}{\text{rank}}
\newcommand{\rank}{\text{rank}}
\newcommand{\im}{\text{Im}}
\renewcommand{\hat}{\widehat}
\newcommand{\alglinelabel}{%
  \addtocounter{ALC@line}{-1}
  \refstepcounter{ALC@line}
  \label
}
\title{Unpaired Multi-Domain Causal Representation Learning}
\author{
Nils Sturma \\
Technical University of Munich\\
Munich Center for Machine Learning\\
\texttt{nils.sturma@tum.de}
\And
Chandler Squires \\
LIDS, Massachusetts Institute of Technology \cr  Broad Institute of MIT and Harvard\\
\texttt{csquires@mit.edu}
\And
Mathias Drton \\
Technical University of Munich\\
Munich Center for Machine Learning\\
\texttt{mathias.drton@tum.de}
\And
Caroline Uhler \\
LIDS, Massachusetts Institute of Technology \cr  Broad Institute of MIT and Harvard\\
\texttt{cuhler@mit.edu}
}
\begin{document}

\maketitle

\begin{abstract}
The goal of causal representation learning is to find a representation of data that consists of causally related latent variables. We consider a setup where one has access to data from multiple domains that potentially share a causal representation. Crucially, observations in different domains are assumed to be unpaired, that is, we only observe the marginal distribution in each domain but not their joint distribution. In this paper, we give sufficient conditions for identifiability of the joint distribution and the shared causal graph in a linear setup. Identifiability holds if we can uniquely recover the joint distribution and the shared causal representation from the marginal distributions in each domain. We transform our results into a practical method to recover the shared latent causal graph. 
\end{abstract}

\section{Introduction}
An important challenge in machine learning is the integration and translation of data across multiple domains \citep{zhu2017unpaired, zhuang2021survey}.
Researchers often have access to large amounts of unpaired data from several domains, e.g., images and text.
It is then desirable to learn a probabilistic coupling between the observed marginal distributions that captures the relationship between the domains. 
One approach to tackle this problem is to assume that there is a latent representation that is invariant across the different domains \citep{bengio2013representation, ericsson2022selfsupervised}.
Finding a probabilistic coupling then boils down to learning such a latent representation, that is, learning high-level, latent variables that explain the variation of the data within each domain as well as similarities across domains. \looseness=-1

In traditional representation learning, the latent variables 
are assumed to be statistically independent, see for example the literature on independent component analysis \citep{hyvarinen2000independent, comon2010handbook, khemakem2020variational}. 
However, the assumption of independence can be too stringent and a poor match to reality. 
For example, the presence of clouds and the presence of wet roads in an image may be dependent, since clouds may cause rain which may in turn cause wet roads.
Thus, it is natural to seek a \emph{causal representation}, that is, a set of high-level \emph{causal} variables and relations among them \citep{schoelkopf2021toward,yang2021CausalVAE}. 
Figure \ref{fig:illustration} illustrates the setup of multi-domain causal representation learning, where multiple domains provide different views on a shared causal representation.\looseness=-1 

Our motivation to study multi-domain causal representations comes, in particular, from single-cell data in biology.
Given a population of cells, different technologies such as imaging and sequencing provide different views on the population.
Crucially, since these technologies destroy the cells, the observations are uncoupled, i.e., a specific cell may either be used for imaging or sequencing but not both.
The aim is to integrate the different views on the population to study the underlying causal mechanisms determining the observed features in various domains \citep{butler2018integrating, stuart2019comprehensive, liu2019jointly, yang2021multidomain, lopez2022learning, gossi2023matchclot, cao2022unified}.
Unpaired multi-domain data also appears in many applications other than single-cell biology. 
For example, images of similar objects are captured in different environments \citep{Beery2018recognition}, large biomedical and neuroimaging data sets are collected in different domains \citep{miller2016multimodal,VanEssen2013the-wu,shafto2014the-cambridge,wang2003training}, or stocks are traded in different markets.

In this paper, we study identifiability of the shared causal representation, that is, its uniqueness in the infinite data limit. Taking on the same perspective as, for example, in \citet{schoelkopf2021toward} and \citet{seigal2023linear},  we assume that observed data is generated in two steps. 
First, the latent variables $Z=(Z_i)_{i \in \cH}$ are sampled from a distribution $P_Z$, where $P_Z$ is determined by an unknown structural causal model among the latent variables. 
Then, in each domain $e \in \{1, \ldots, m\}$, the observed vector $X^e \in \mathbb{R}^{d_e}$ is the image of a subset of the latent variables under a domain-specific, injective mixing function $g_e$. 
That is, \looseness=-1
\[
    X^e = g_e(Z_{S_e}),
\]
where $S_e \subseteq \cH$ is a subset of indices. 
A priori, it is unknown whether a latent variable $Z_i$ with $i \in S_e$ is shared across domains or domain-specific.
Even the number of latent variables which are shared across domains is unknown.
Moreover, we only observe the marginal distribution of each random vector $X^e$, but none of the joint distributions over pairs $X^e, X^f$ for $e \neq f$.
Said differently, observations across domains are unpaired. 
Assuming that the structural causal model among the latent variables as well as the mixing functions are linear, our main contributions are:
\begin{enumerate} 
    \item We lay out conditions under which we can identify the joint distribution of $X^1, \ldots, X^{m}$.
    \item We give additional conditions under which we are able to identify the causal structure among the shared latent variables.
\end{enumerate}
In particular, identifiability of the joint distribution across domains enables data translation.
That is, given observation $x$ in domain $e$, translation to domain $f$ can be achieved by computing $\mathbb{E}[X_f|X_e=x]$.  
Furthermore, identifying the causal structure among the shared latent variables lets us study the effect of interventions on the different domains.

The main challenge in proving rigorous identifiability results for multi-domain data is that we cannot apply existing results for single-domain data in each domain separately. Even if the causal structure of the latent variables in a single domain is identifiable, it remains unclear how to combine multiple causal structures, i.e., in which way latent variables are shared. 
We circumvent this problem via a two-step approach: First, we extend the identifiability of linear independent component analysis \citep{comon1994independent, hyvarinen2000independent, eriksson2004identifiability, geert2022nonindependent} to the multi-domain setup, which allows us to identify the joint distribution and distinguish between shared and domain-specific latent variables. Moreover, we identify an ``overall mixing matrix'' and, in a second step, exploit sparsity constraints in this matrix to identify the causal structure among the shared latent variables. This leverages recent results on causal discovery under measurement error in single domains that also exploit sparsity \citep{xie2020generalized, chen2022identification, xie2022identification,huang2022latent}.
Although we emphasize that our focus in this paper is on identifiability, our proofs also suggest methods to learn the joint distribution as well as the shared causal graph from finite samples. 
We provide algorithms for the noisy setting and, moreover, we analyze how the number of domains reduce uncertainty with respect to the learned representation. \looseness=-1

The paper is organized as follows. 
In the next paragraphs we discuss further related work. 
Section \ref{sec:setup} provides a precise definition of the considered setup.
In Section \ref{sec:joint-distribution} we consider identifiability of the joint distribution. 
Using these results, we study identifiability of the causal graph in Section \ref{sec:joint-causal-graph}. 
We conclude with a small simulation study as a proof of concept for the finite sample setting in Section \ref{sec:simulations}.
Due to space constraints, the detailed discussion of the finite sample setting is deferred to the Appendix. 
Moreover, the Appendix contains all proofs, discussions on the necessity of our assumptions,  and additional examples and simulation results. \looseness=-1

\begin{figure}[tb]
\centering
\includegraphics[width=0.6\linewidth]{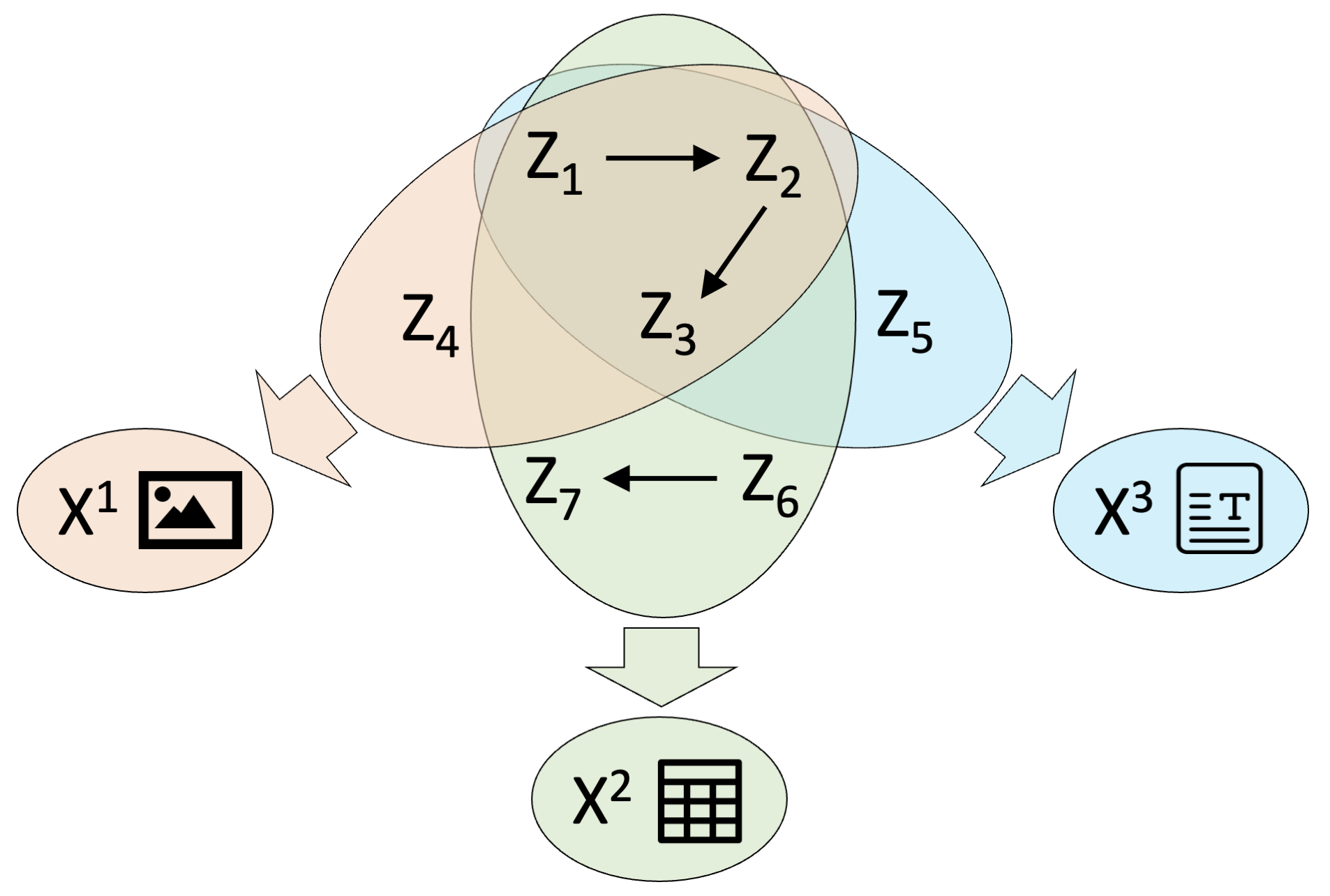}
\caption{\textbf{Setup}. A latent causal representation where multiple domains $X^e$ provide different ``views'' on subsets of the latent variables $Z_i$. 
The domains may correspond to different data modalities such as images, text or numerical data. Crucially, the observations across domains are unpaired, i.e., they arise from different states of the latent causal model.  
}
\label{fig:illustration}
\end{figure}

\textbf{Multi-domain Integration.}
Motivated by technological developments for measuring different modalities at single-cell resolution, several methods have been proposed recently for domain translation between \textit{unpaired} data.
The proposed methods rely on a variety of techniques, including manifold alignment \citep{welch2017matcher,amodio2018magan,liu2019jointly}, matrix factorization \citep{duren2018integrative}, correlation analysis \citep{barkas2019joint,stuart2019comprehensive}, coupled autoencoders \citep{yang2019multidomain}, optimal transport \citep{cao2022unified}, regression analysis \citep{yuan2022integration}, and semisupervised learning \citep{lin2022scjoint}.
Implicitly, these methods presume the existence of a \textit{shared} latent space where the different modalities either completely align or at least overlap.
However, to the best of our knowledge, none of these methods have rigorous \textit{identifiability} guarantees, i.e., the methods are not guaranteed to recover a correct domain translation mapping even for infinite data.
Our work advances the theoretical understanding of multi-domain integration by providing identifiability guarantees on recovering the shared latent space. \looseness=-1

\textbf{Group Independent Component Analysis.}
The primary tool that we use for identifiability is linear independent component analysis (ICA) \citep{comon1994independent,eriksson2004identifiability}.
Many works extend ICA to the multi-domain setting.
These methods primarily come from computational neuroscience, where different domains correspond to different subjects or studies.
However, to the best of our knowledge, all prior works require pairing between samples.
These works can be categorized based on whether the samples are assumed to be voxels
\citep{calhoun2001method,esposito2005independent},
time points
\citep{svensen2002ica,varoquaux2009canica,hyvarinen2013testing}, 
or either 
\citep{beckmann2005tensorial,sui2009ica}.
For reviews, see 
\citet{calhoun2009review} and \citet{chabriel2014joint}.
Related are methods for \textit{independent vector analysis}
\citep{kim2006independent,anderson2014independent,bhinge2019extraction} 
and multiset canonical correlation analysis
\citep{nielsen2002multiset,li2011joint,klami2014group},
which allow the latent variables to take on different values in each domain but still require sample pairing.
Most of the mentioned methods lack identifiability guarantees, only newer work
\citep{richard2021shared}
provides sufficient conditions for identifiability.
Furthermore, all mentioned methods assume that every latent variable is shared across all domains, while our setup allows for shared and domain-specific latent variables.
Some methods, e.g., \citet{lukic2002ica}, \citet{maneshi2016validation}, and \citet{pandeva2022multi}, permit both shared and domain-specific components, but only consider the paired setting.
In this paper, we extend these results to the \textit{unpaired} setting. \looseness=-1

\textbf{Latent Structure Discovery.}
Learning causal structure between latent variables has a long history, e.g., in measurement models \citep{silva2006learning}. One recent line of work studies the problem under the assumption of access to interventional data \citep[e.g.,][]{liu2022identifying,seigal2023linear}. In particular, \citet{seigal2023linear} show that the latent graph is identifiable if the interventions are sufficiently diverse. 
Another line of work,  closer to ours and not based on interventional data, shows that the graph is identified under certain sparsity assumptions on the mixing functions \citep{xie2020generalized, chen2022identification, xie2022identification, huang2022latent}. However, these methods are not suitable in our setup since they require paired data in a single domain. 
One cannot apply them in each domain separately since it would be unclear how to combine the multiple latent causal graphs, that is, which of the latent variables are shared. 
In this work, we lay out sparsity assumptions on the mixing functions that are tailored to the \emph{unpaired multi-domain} setup. 
The works of \citet{adams2021identification} and \citet{zeng2021causal} may be considered closest to ours as they also treat a setting with multiple domains and unpaired data. However, our setup and results are more general.
\citet{adams2021identification} assume that the number of observed variables are the same in each domain, whereas we consider domains of \emph{different dimensions} corresponding to the fact that observations may be of very different nature. Further, we allow for \emph{shared and domain-specific} latent variables, where the number of shared latent variables is unknown, while in \citet{adams2021identification} it is assumed that all latent variables are shared. Compared to \citet{zeng2021causal}, we consider a general but fixed number of observed variables, while \citet{zeng2021causal} only show identifiability of the full model in a setup where the number of observed variables in each domain increases to infinity. On a more technical level, the conditions in \citet{zeng2021causal} require two pure children to identify the shared latent graph, while we prove identifiability under the weaker assumption of two partial pure children; see Section \ref{sec:joint-causal-graph} for precise definitions. \looseness=-1

\textbf{Notation.}
Let $\mathbb{N}$ be the set of nonnegative integers.  For positive $n\in\mathbb{N}$, we define $[n]=\{1, \ldots, n\}$.
For a matrix $M \in \mathbb{R}^{a \times b}$, we denote by $M_{A,B}$ the submatrix containing the rows indexed by $A \subseteq [a]$ and the columns indexed by $B \subseteq [b]$. 
Moreover, we write $M_{B}$ for the submatrix containing all rows but only the subset of columns indexed by $B$.
Similarly, for a tuple $x=(x_1, \ldots, x_b)$, we denote by $x_B$ the tuple only containing the entries indexed by $B$.
A matrix $Q=Q_{\sigma} \in \mathbb{R}^{p \times p}$ is a \emph{signed permutation matrix} if it can be written as the product of a diagonal matrix $D$ with entries $D_{ii} \in \{\pm 1\}$ and a permutation matrix $\widetilde{Q}_{\sigma}$ with entries $(\widetilde{Q}_{\sigma})_{ij}=\mathbbm{1}_{j = \sigma(i)}$, where $\sigma$ is a permutation on $p$ elements. 
Let $P$ be a $p$-dimensional joint probability measure of a collection of random variables $Y_1, \ldots, Y_p$. Then we denote by $P_i$ the marginal probability measure such that $Y_i \sim P_i$.  We say that $P$ has \emph{independent marginals} if the random variables $Y_i$ are mutually independent.
Moreover, we denote by $M\#P$ the $d$-dimensional \emph{push-forward measure} under the linear map defined by the matrix $M \in \mathbb{R}^{d \times p}$. 
If $Q$ is a signed permutation matrix and the probability measure $P$ has independent marginals, then $Q\#P$ also has independent marginals. For univariate probability measures we use the shorthand $(-1)\#P = -P$.

\section{Setup} \label{sec:setup}

Let $\cH =[h]$ for $h\ge 1$, and let $Z = (Z_1, \ldots, Z_h)$ be latent random variables that follow a linear structural equation model. 
That is, the variables are related by a linear equation
\begin{equation} \label{eq:linear-sem}
    Z = A Z + \eps,
\end{equation}
with $h \times h$ parameter matrix $A=(a_{ij})$ and zero-mean, independent random variables $\eps=(\eps_1, \ldots, \eps_h)$ that represent stochastic errors. 
Assume that we have observed random vectors $X^e \in \mathbb{R}^{d_e}$ in multiple domains of interest $e \in [m]$, where the dimension $d_e$ may vary across domains. 
Each random vector is the image under a linear function of a subset of the latent variables. 
In particular, we assume that there is a subset $\cL \subseteq \cH$  representing the shared latent space such that each $X^e$ is generated via the mechanism 
\begin{equation} \label{eq:mechanism}
 X^e = G^e \cdot \begin{pmatrix} 
 Z_{\cL} \\
 Z_{I_e}
 \end{pmatrix},
\end{equation}
where $I_e \subseteq \cH \setminus \cL$.
We say that the latent variable $Z_{I_e}$ are \emph{domain-specific}  for domain $e \in [m]$ while the latent variables $Z_{\cL}$ are \emph{shared} across all domains. 
As already noted, we are motivated by settings where the shared latent variables $Z_{\cL}$ capture the key causal relations and the different domains are able to give us combined information about these relations. Likewise, we may think about the domain-specific latent variables $Z_{I_e}$ as ``noise'' in each domain, independent of the shared latent variables. 
Specific models are now derived from \eqref{eq:linear-sem}-\eqref{eq:mechanism} by assuming specific (but unknown) sparsity patterns in $A$ and $G^e$. 
Each model is given by a ``large'' directed acyclic graph (DAG)  that encodes the multi-domain setup. 
To formalize this, we introduce pairwise disjoint index sets $V_1,\dots,V_m$, where $V_e$ indexes the coordinates of $X^e$, i.e., $X^e=(X_v:v\in V_e)$ and $|V_e|=d_e$.  Then $V=V_1 \cup \cdots \cup V_m$ indexes all observed random variables. 
We define an $m$-domain graph such that the latent nodes are the only parents of observed nodes and there are no edges between shared and domain-specific latent nodes.  \looseness=-1
\begin{definition} \label{def:multi-domain-graph}
Let $\cG$ be a DAG whose node set is the disjoint union $\cH \cup V=\cH \cup V_1 \cup \cdots \cup V_m$.  Let $D$ be the edge set of $\cG$.
Then $\cG$ is an \emph{$m$-domain graph} with \emph{shared latent nodes} $\cL = [\ell] \subseteq \cH$ if the following is satisfied:
\begin{enumerate}
    \item All parent sets contain only latent variables, i.e., $\pa(v)=\{w: w \rightarrow v \in D\} \subseteq \cH$ for all $v \in \cH\cup V$.
    \item The set $\cL$ consists of the common parents of variables in all different domains, i.e., $u\in\cL$ if and only if $u\in \pa(v) \cap \pa(w)$ for $v \in V_e$, $w \in V_f$ with $e \neq f$.
    \item Let $I_e = S_e \setminus \cL$ be the domain-specific latent nodes,
    where $S_e := \pa(V_e)=\cup_{v\in V_e} \pa(v)\subseteq \cH$. 
    Then there are no edges in $D$ that connect a node in $\cL$ and a node $\cup_{e=1}^m I_e$ or that connect a node in $I_e$ and a node in $I_f$ for any $e\not= f$.
\end{enumerate}
\end{definition}

To emphasize that a given DAG is an $m$-domain graph we write $\cG_{m}$ instead of $\cG$. We also say that $S_e$ is the set of \textit{latent parents} in domain $e$ and denote its cardinality by $s_e=|S_e|$. Note that the third condition in Definition \ref{def:multi-domain-graph} does not exclude causal relations between the domain-specific latent variables, that is, there may be edges $v \rightarrow w$ for $v,w \in I_e$. 
Since the sets $I_e$ satisfy $I_e \cap I_f = \emptyset$ for $e \neq f$, we specify w.l.o.g.~the indexing convention $I_e = \{\ell + 1 + \sum_{i=1}^{e-1}|I_i|, \ldots, \ell + \sum_{i=1}^e|I_i|\}$ and $h=\ell + \sum_{e=1}^m |I_e|$.  \looseness=-1

\begin{example} \label{ex:matrix-A}
    Consider the compact version of a $2$-domain graph in Figure \ref{fig:example}. There are $h=5$ latent nodes where $\cL=\{1,2\}$ are shared and  $I_1 = \{3,4\}$ and $I_2 = \{5\}$ are domain-specific. 
    A full $m$-domain graph is given 
    in Appendix \ref{sec:examples}.  
\end{example}

\begin{figure}[t]
\centering
    \centering
    \tikzset{
      every node/.style={circle, inner sep=0.2mm, minimum size=0.6cm, draw, thick, black, fill=white, text=black},
      every path/.style={thick}
    }
    \begin{tikzpicture}[align=center]
    \node[] (z3) at (0,1.5) {$3$};
    \node[] (z4) at (1.5,1.5) {$4$};
    \node[] (z1) at (3,1.5) {$1$};
    \node[] (z2) at (4.5,1.5) {$2$};
    \node[] (z5) at (6,1.5) {$5$};
    \node[minimum size=0.7cm,fill=lightgray] (xe) at (2.25,0) {\textbf{$V_1$}};
    \node[minimum size=0.7cm,fill=lightgray] (xf) at (5.25,0) {\textbf{$V_2$}};

    \draw[blue, dashed] [-latex] (z1) edge (xe);
    \draw[blue, dashed] [-latex] (z2) edge (xe);
    \draw[blue, dashed] [-latex] (z3) edge (xe);
    \draw[blue, dashed] [-latex] (z4) edge (xe);
    
    \draw[blue, dashed] [-latex] (z1) edge (xf);
    \draw[blue, dashed] [-latex] (z2) edge (xf);
    \draw[blue, dashed] [-latex] (z5) edge (xf);
    
    \draw[red] [-latex] (z3) edge (z4);
    \draw[red] [-latex] (z1) edge (z2);
    \end{tikzpicture}
    \caption{\textbf{ Compact version of a $2$-domain graph $\cG_{2}$} with five latent nodes and two domains $V_1$ and $V_2$.  All observed nodes in each domain are represented by a single grey node.  We draw a dashed blue edge from latent node $h \in \cH$ to domain $V_e \subseteq V$ if $h \in S_e = \pa(V_e)$.  The random vectors associated to the two domains are uncoupled, that is, we do not observe their joint distribution.} 
    \label{fig:example}
\end{figure}
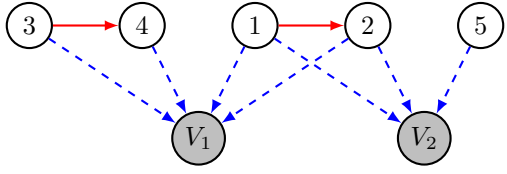

Each $m$-domain graph postulates a statistical model that corresponds to the structural equation model in \eqref{eq:linear-sem} and the mechanisms  in \eqref{eq:mechanism}, with potentially sparse matrices $A$ and $G^e$. For two sets of nodes $W,Y \subseteq \cH \cup V$, we denote by $D_{WY}$ the subset of edges $D_{WY} =\{y \rightarrow w \in D: w \in W, y \in Y\}$. Moreover, let $\mathbb{R}^{D_{WY}}$ be the set of real $|W| \times |Y|$ matrices $M=(m_{wy})$ with rows indexed by $W$ and columns indexed by $Y$, such that the support is given by $D_{WY}$, that is, $m_{wy}=0$ if $y \rightarrow w \not\in D_{WY}$. 

\begin{definition}\label{def:model}
Let $\cG_{m}=(\cH \cup V, D)$ be an $m$-domain graph. Define the map 
\begin{align*}
  \phi_{\cG_{m}} :   \mathbb{R}^{D_{V \cH}} \times \mathbb{R}^{D_{\cH \cH}}  &\longrightarrow \mathbb{R}^{|V| \times |\cH|} \\
  (G, A) &\longmapsto G \cdot (I-A)^{-1}. 
\end{align*}
Then the \emph{multi-domain causal representation (MDCR) model} $\cM(\cG_{m})$ is given by the set of probability measures $P_{X} = B \# P$, where  $B \in \im(\phi_{\cG_{m}})$ and $P$ is an $h$-dimensional probability measure with independent, mean-zero marginals $P_i, i \in \cH$. We say that the pair $(B, P)$ is a \emph{representation} of $P_X \in \cM(\cG_{m})$.
\end{definition}

Definition \ref{def:model} corresponds to the model defined in Equations \eqref{eq:linear-sem} and \eqref{eq:mechanism}. If $P_X \in \cM(\cG_{m})$ with representation $(B,P)$, then $P_X$ is the joint distribution of the observed domains $X=(X^1, \ldots, X^m)$. 
The distribution of the random variables $\eps_i$ in Equation \eqref{eq:linear-sem} is given by the marginals $P_i$.  Moreover, for any matrix $G \in \mathbb{R}^{D_{V \cH}}$, we denote the submatrix $G^e = G_{V_e, S_e} \in \mathbb{R}^{d_e \times s_e}$ which coincides with the matrix $G^e$ from Equation \eqref{eq:mechanism}. For the graph in Figure \ref{fig:example}, we compute a concrete example of the matrix $B$ in Example \ref{ex:matrix-B} in the Appendix. Importantly, in the rest of the paper we assume to only observe the marginal distribution $P_{X^e}$ in each domain but not the joint distribution $P_X$.

Ultimately, we are interested in recovering the graph $G_{\cL} =(\cL, D_{\cL \cL})$ among the shared latent nodes. We proceed by a two-step approach: In Section \ref{sec:joint-distribution} we recover the representation $(B,P)$ of the joint distribution $P_X$. To be precise, we recover a matrix $\widehat{B}$ that is equal to $B$ up to certain permutations of the columns. Then we use the matrix $\widehat{B}$ to recover the shared latent graph $G_{\cL}$ in Section \ref{sec:joint-causal-graph} and show that recovery is possible up to trivial relabeling of latent nodes that appear in the same position of the causal order. \looseness=-1

\section{Joint Distribution} \label{sec:joint-distribution}

To identify the joint distribution $P_X$, we apply identifiability results from linear ICA in each domain separately and match the recovered probability measures $P_i$ for identifying which of them are shared, that is, whether or not $i \in \cL$. Let $\cG_m$ be an $m$-domain graph with shared latent nodes $\cL$, and let $P_X \in \cM(\cG_m)$ with representation $(B,P)$. Recall that $B=G(I-A)^{-1}$ with  $G \in \mathbb{R}^{D_{V \cH}}$ and $A \in \mathbb{R}^{D_{\cH \cH}}$. 
We make the following technical assumptions.
\begin{enumerate}
    \item[{\crtcrossreflabel{(C1)}[ass:distributions]}] (Different error distributions.) The marginal distributions $P_i, i \in \cH$ are non-degenerate, non-symmetric and have unit variance. 
    Moreover, the measures are pairwise different to each other and to the flipped versions, that is, $P_i \neq P_j$ and $P_i \neq -P_j$ for all $i,j \in \cH$ with $i \neq j$. Subsequently, we let $d$ be a distance on the set of univariate Borel probability measures such that $d(P_i, P_j)\neq 0$ and $d(P_i, -P_j)\neq 0$ for $i \neq j$.  
    \item[{\crtcrossreflabel{(C2)}[ass:full-rank]}] (Full rank of mixing.) For each $e \in [m]$, the matrix $G^e = G_{V_e, S_e} \in \mathbb{R}^{d_e \times s_e}$ is of full column rank.
\end{enumerate}
By not allowing symmetric distributions in Condition \ref{ass:distributions}, we assume in particular that the distributions of the errors are non-Gaussian. Non-Gaussianity together with the assumptions of pairwise different and non-symmetric error distributions allow us to extend the results on identifiability of linear ICA to the unpaired multi-domain setup and to identify the joint distribution.
In particular, the assumption of pairwise different error distributions allows for ``matching'' the distributions across domains to identify the ones corresponding to the shared latent space. Non-symmetry  accounts for the sign-indeterminacy of linear ICA when matching the distributions.  We discuss the necessity of these assumptions in Remark \ref{rem:necessary} and, in more detail, in Appendix \ref{sec:assumptions-discussion}. 
Note that Condition \ref{ass:distributions} is always satisfied in a generic sense, that is, randomly chosen probability distributions on the real line are pairwise different and non-symmetric with probability one.
Finally, Condition \ref{ass:full-rank} requires in particular that for each shared latent node $k \in \cL$ there is at least one node $v \in V_e$ in every domain $e \in [m]$  such that $k \in \pa(v)$. \looseness=-1

Under Conditions \ref{ass:distributions} and \ref{ass:full-rank} we are able to derive a sufficient condition for identifiability of the joint distribution.
Let $SP(p)$ be the set of $p \times p$ signed permutation matrices.
We define the set of signed permutation \emph{block matrices}: 
\[
\Pi = 
\left\{
    \text{diag}(\Psi_{\cL}, \Psi_{I_1}, \ldots, \Psi_{I_m})
    :
    \Psi_{\cL} \in SP(\ell) \text{ and } \Psi_{I_e} \in SP(|I_e|)
\right\}.
\]
Our main result is the following.

\begin{theorem} \label{thm:main-theorem}
Let $\cG_{m}$ be an $m$-domain graph with shared latent nodes $\cL=[\ell]$, and  let 
$P_X \in \cM(\cG_{m})$ with representation $(B,P)$. Suppose that $m\geq 2$ and that Conditions \ref{ass:distributions} and \ref{ass:full-rank} are satisfied. Let $(\widehat{\ell}, \widehat{B}, \widehat{P})$ be the output of Algorithm \ref{alg:id-joint-distr}. Then $\widehat{\ell}=\ell$ and
\[
    \widehat{B} = B \cdot \Psi  \quad \text{and} \quad \widehat{P} = \Psi^{\top} \# P, 
\]
for a signed permutation block matrix $\Psi \in \Pi$.
\end{theorem}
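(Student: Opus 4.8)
The plan is to reduce the problem to $m$ separate linear ICA problems, one per domain, and then to stitch the per-domain solutions together by matching source distributions across domains. First I would exploit the block structure forced by Definition \ref{def:multi-domain-graph}: condition~3 forbids edges between $\cL$ and $\bigcup_e I_e$ and between distinct $I_e, I_f$, so the matrix $A$ is block-diagonal with respect to the partition $\cH = \cL \cup I_1 \cup \cdots \cup I_m$, and hence so is $(I-A)^{-1}$. Since $G_{V_e,\cH}$ is supported on the columns $S_e=\cL\cup I_e$, this gives $B_{V_e,\cH\setminus S_e}=0$ and $B^e := B_{V_e,S_e}=G^e\,(I-A)^{-1}_{S_e,S_e}$, where the diagonal block $(I-A)^{-1}_{S_e,S_e}$ is invertible. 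By Condition \ref{ass:full-rank} the factor $G^e$ has full column rank, so $B^e$ has full column rank $s_e$, and writing $\eps\sim P$ for the independent source vector the marginal model in domain $e$ is exactly the full-column-rank linear ICA model $X^e = B^e\eps_{S_e}$.

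Next I would invoke identifiability of linear ICA in each domain. Condition \ref{ass:distributions} guarantees the marginals $P_i$ are non-degenerate and non-symmetric, hence non-Gaussian, so from the marginal law $P_{X^e}$ alone one recovers $B^e$ and the multiset of source laws $\{P_i : i\in S_e\}$ up to common column scaling and permutation; the unit-variance normalization reduces the scaling to a sign, so ICA returns $\widehat{B}^e = B^e Q_e$ for some $Q_e\in SP(s_e)$ together with the recovered marginals, which form the set $\{\pm P_i : i\in S_e\}$ in the order dictated by $Q_e$. The heart of the argument is then to decide, using only these per-domain outputs, which recovered components are shared. By Definition \ref{def:multi-domain-graph} together with Condition \ref{ass:full-rank}, a shared index $j\in\cL$ satisfies $j\in S_e$ for \emph{every} $e$, whereas a domain-specific index $j\in I_e$ lies in $S_e$ only; hence $P_j$ with $j\in\cL$ reappears (up to sign) in all $m$ domains, while $P_j$ with $j\in I_e$ appears in domain $e$ only. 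Condition \ref{ass:distributions} makes this dichotomy detectable: since $d(P_i,P_j)\neq 0$ and $d(P_i,-P_j)\neq 0$ for $i\neq j$, two recovered laws from different domains coincide up to sign iff they originate from the same shared source. Matching recovered laws across domains via $d$ thus identifies the shared components exactly, yielding $\widehat{\ell}=\ell$ and a correspondence between the shared columns of the different $\widehat{B}^e$.

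Finally I would assemble the global pair $(\widehat{B},\widehat{P})$: fix domain $1$ as a reference ordering and sign for the $\ell$ shared components, and for every other domain use the matching to permute its shared columns into that reference order, placing the domain-specific columns of each $\widehat{B}^e$ into the block $I_e$ (and zeros into the blocks $I_f$, $f\neq e$). The aligned shared columns then satisfy $\widehat{B}_{V_e,\cL}=B_{V_e,\cL}\,\Psi_{\cL}$ for a single $\Psi_{\cL}\in SP(\ell)$ common to all domains, while each domain-specific block satisfies $\widehat{B}_{V_e,I_e}=B_{V_e,I_e}\,\Psi_{I_e}$; collecting these shows $\widehat{B}=B\Psi$ and $\widehat{P}=\Psi^{\top}\#P$ for the block matrix $\Psi=\text{diag}(\Psi_{\cL},\Psi_{I_1},\ldots,\Psi_{I_m})\in\Pi$.

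The step I expect to be the main obstacle is resolving the \emph{signs} of the shared columns consistently across domains, since within-domain ICA determines each shared column only up to sign. This is exactly where non-symmetry in Condition \ref{ass:distributions} is essential: because $P_j\neq -P_j$, the relative sign between the copy of $P_j$ recovered in domain $1$ and the copy recovered in domain $e$ is detectable through $d$, so one can flip columns to agree. The crux of the theorem is precisely that this cross-domain consistency of the shared block collapses the $m$ independent ICA sign-permutation ambiguities into the single factor $\Psi_{\cL}$, while leaving the domain-specific blocks $\Psi_{I_e}$ free — which is exactly the block structure of $\Pi$.
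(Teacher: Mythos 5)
Your proposal is correct and follows essentially the same route as the paper's proof: per-domain reduction to a full-column-rank linear ICA problem via the block structure of $(I-A)^{-1}$ and Condition \ref{ass:full-rank}, identifiability of each $B^e$ up to signed permutation after the unit-variance normalization, cross-domain matching of recovered marginals justified by the pairwise-distinct and non-symmetric distributions in Condition \ref{ass:distributions}, and assembly into a single block matrix $\Psi\in\Pi$ with a common shared block $\Psi_{\cL}$. You also correctly isolate the role of non-symmetry in resolving the sign of the shared columns consistently across domains, which is exactly the point the paper's proof makes via the dichotomy $d(P^e_i,P^f_j)=0$ or $d(P^e_i,-P^f_j)=0$ but not both.
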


\begin{algorithm}[tb] 
\begin{algorithmic}[1]
    \STATE {\bfseries Input:} Probability measures $P_{X^e}$ for all $e \in [m]$. \alglinelabel{line:input}
    \STATE {\bfseries Output:} Number of shared latent variables $\widehat{\ell}$, matrix $\widehat{B}$ and probability measure $\widehat{P}$ with independent marginals.
    \alglinelabel{line:output}
    \FOR{$e \in [m]$} 
    \STATE \underline{Linear ICA:} \alglinelabel{line:linear-ica} Find the smallest value $\widehat{s}_e$ such that $P_{X^e} = \widehat{B}^e\#P^e$ for a matrix $\widehat{B}^e \in \mathbb{R}^{d_e \times \widehat{s}_e}$ and an $\widehat{s}_e$-dimensional probability measure $P^e$ with independent, mean-zero and unit-variance marginals $P^e_i$. 
    \ENDFOR
    \STATE \underline{Matching:} Let $\widehat{\ell}$ be the maximal number such that there are signed permutation matrices $\{ Q^e \}_{e \in [m]}$ satisfying
    \[
        d([(Q^e)^{\top} \# P^e]_i, [(Q^f)^{\top} \# P^f]_i) = 0
    \]
    for all $i=1, \ldots, \widehat{\ell}$ and for all $f \neq e$. Let $\widehat{\cL}=\{1, \ldots, \widehat{\ell}\}$.  \alglinelabel{line:matching} 
    \STATE 
    \underline{Construct} the matrix $\widehat{B}$ and the tuple of probability measures $\widehat{P}$ given by
    \[
        \widehat{B} = \begin{pNiceArray}{c|ccc}[parallelize-diags=false]
        [\widehat{B}^1  Q^1]_{\widehat{\cL}}& [\widehat{B}^1  Q^1]_{[\widehat{s}_1] \setminus \widehat{\cL}}  &  &  \\
        \vdots & & \ddots & \\
        [\widehat{B}^m  Q^m]_{\widehat{\cL}} & &  & [\widehat{B}^m  Q^m]_{[\widehat{s}_m] \setminus \widehat{\cL}} 
        \end{pNiceArray} 
        \text{ and } 
        \widehat{P} = \begin{pmatrix}[(Q^1)^{\top} \# P^1]_{\widehat{\cL}} \\ [(Q^1)^{\top} \# P^1]_{[\widehat{s}_1] \setminus \widehat{\cL}} \\
        \vdots \\
        [(Q^m)^{\top} \# P^m]_{[\widehat{s}_m] \setminus \widehat{\cL}}
        \end{pmatrix}.
    \] \alglinelabel{line:construction}
    \STATE {\bfseries return} ($\widehat{\ell}$, $\widehat{B}$, $\widehat{P}$). 
\end{algorithmic}
   \caption{IdentifyJointDistribution}
   \label{alg:id-joint-distr}
\end{algorithm}

Theorem \ref{thm:main-theorem} says that the matrix $B$ is identifiable up to signed block permutations of the columns. Under the assumptions of Theorem \ref{thm:main-theorem} it holds that $\widehat{B} \# \widehat{P}$ is equal to $P_X$. That is, the joint distribution of the domains is identifiable.

\begin{remark} \label{rem:necessary} 
    While Theorem \ref{thm:main-theorem} is a sufficient condition for identifiability of the joint distribution, we emphasize that pairwise different error distributions are in most cases also necessary; we state the exact necessary condition in Proposition \ref{prop:necessary-condition} in the Appendix. Said differently, if one is willing to assume that conceptually different latent variables also follow a different distribution, then identification of these variables is possible, and otherwise (in most cases) not. Apart from pairwise different error distributions, non-symmetry is then required to fully identify the joint distribution whose dependency structure is determined by the shared latent variables. If the additional assumption on non-symmetry is not satisfied, then it is still possible to identify the shared, conceptually different latent variables, which becomes clear by inspecting the proofs of Theorem \ref{thm:main-theorem} and Proposition \ref{prop:necessary-condition}. The non-identifiability of the joint distribution would only result in sign indeterminacy, that is, entries of the matrix $\widehat{B}$ could have a flipped sign. \looseness=-1
\end{remark}

\begin{remark}
    By checking the proof of Theorem \ref{thm:main-theorem}, the careful reader may notice that the statement of the theorem still holds true when we relax the third condition in the definition of an $m$-domain graph.
    That is, one may allow directed paths from shared to domain-specific latent nodes but not vice versa. For example, an additional edge $1 \rightarrow 4$ between the latent nodes $1$ and $4$ would be allowed in the graph in Figure \ref{fig:example}. In this case, the dependency structure of the domains is still determined by the shared latent space. 
    However, the structural assumption that there are no edges between shared and domain-specific latent nodes is made for identifiability of the shared latent graph in Section \ref{sec:joint-causal-graph}.
\end{remark}
\begin{remark}
    The computational complexity of Algorithm \ref{alg:id-joint-distr} depends on the complexity of the chosen linear ICA algorithm, to which we make $m$ calls. Otherwise,  the dominant part is the matching in Line \ref{line:matching} with worst case complexity $\mathcal{O}(m \cdot \max_{e \in [m]} d_e^2 )$, where we recall that $m$ is the number of domains and $d_e$ is the dimension of domain $e$.
\end{remark}
In Appendix \ref{sec:empirical-algo} we state a complete version of Algorithm \ref{alg:id-joint-distr} for the finite sample setting.  In particular, we provide a method for the matching in Line \ref{line:matching}  based on the two-sample Kolmogorov-Smirnov test. For finite samples, there might occur false discoveries, that is, distributions are matched that are actually not the same. With our method, we show that the probability of falsely discovering shared nodes shrinks exponentially with the number of domains.

\section{Identifiability of the Causal Graph} \label{sec:joint-causal-graph}

We return to our goal of identifying the causal graph $\cG_{\cL} = (\cL, D_{\cL \cL})$ among the shared latent nodes.
By Theorem \ref{thm:main-theorem}, we can identify the representation $(B,P)$ of $P_X \in \cM(\cG_{m})$ from the marginal distributions. In particular, we recover the matrix $\widehat{B}=B \Psi$ for a signed permutation block matrix $\Psi \in \Pi$. 
Moreover, we know which columns correspond to the shared latent nodes. That is, we know that the submatrix $\widehat{B}_{\cL}$ obtained by only considering the columns indexed by $\cL= \widehat{\cL} =[\ell]$ is equal to $B_{\cL} \Psi_{\cL}$, where $\Psi_{\cL} \in SP(\ell)$.

\begin{problem} \label{prbm:recover-graph}
Let $B \in \im(\phi_{\cG_{m}})$ for an $m$-domain graph $\cG_{m}$ with shared latent nodes $\cL$.
Given $\widehat{B}_{\cL} = B_{\cL} \Psi_{\cL}$ with $\Psi_{\cL}$ a signed permutation matrix, when is it possible to identify the graph  $\cG_{\cL}$?
\end{problem}
Recently, \citet{xie2022identification} and \citet{dai2022independence} show that, in the one-domain setting with independent additive noise, the latent graph can be identified if each latent variable has at least two pure children. We obtain a comparable result tailored to the multi-domain setup.

\begin{definition} \label{def:pure-children}
Let $\cG_{m}=(\cH \cup V, D)$ be an $m$-domain graph with shared latent nodes $\cL \subseteq \cH$. For $k \in \cL$, we say that an observed node $v \in V$ is a \emph{partial pure child} of $k$ if $\pa(v) \cap \cL = \{ k \}$.
\end{definition}

For a partial pure child  $v\in V$, there may still be domain-specific latent nodes that are parents of $v$. Definition \ref{def:pure-children} only requires that there is exactly one parent that is in the set $\cL$. This explains the name \emph{partial} pure child; see  Example \ref{ex:two-pure-children} in the Appendix for further elaboration.

W.l.o.g.~we assume in this section that the shared latent nodes are \emph{topologically ordered} such that $i \rightarrow j \in D_{\cL \cL}$ implies $i < j$ for all $i,j \in \cL$. We further assume:
\begin{enumerate}
    \item[{\crtcrossreflabel{(C3)}[ass:pure-children]}](Two partial pure children across domains.) For each shared latent node $k \in \cL$, there exist two partial pure children. 
    \item[{\crtcrossreflabel{(C4)}[ass:rank-faithfulness]}](Rank faithfulness.) 
    For any two subsets $Y \subseteq V$ and $W \subseteq \cL$, we assume that 
    \[
        \rk(B_{Y,W}) = \max_{B^{\prime} \in \im(\phi_{\cG_{m}})} \rk(B_{Y,W}^{\prime}).
    \]
\end{enumerate}
The two partial pure children required in Condition \ref{ass:pure-children} may either be in distinct domains or in a single domain. 
This is a sparsity condition on the large mixing matrix $G$. 
In Appendix \ref{sec:assumptions-discussion} we discuss that the identification of the joint latent graph is impossible without any sparsity assumptions. 
We conjecture that two partial pure children are not necessary, but we leave it open for future work to find a non-trivial necessary condition.
Roughly speaking, we assume in Condition \ref{ass:rank-faithfulness} that no configuration of edge parameters coincidentally yields low rank. The set of matrices $B \in \im(\phi_{\cG_{m}})$ that violates \ref{ass:rank-faithfulness} is a subset of measure zero of $\im(\phi_{\cG_{m}})$ with respect to the Lebesgue measure. 
Note that our conditions do not impose constraints on the graph $\cG_{\cL}$. 
Our main tool to tackle Problem \ref{prbm:recover-graph} will be the following lemma.   \looseness=-1

\begin{lemma} \label{lem:pure-children}
Let $B \in \im(\phi_{\cG_{m}})$ for an $m$-domain graph $\cG_{m}$. Suppose that Condition \ref{ass:rank-faithfulness} is satisfied and that there are no zero-rows in $B_{\cL}$. Let $v,w \in V$. Then $\rk(B_{\{v,w\},\cL})=1$ if and only if there is a node $k \in \cL$ such that both $v$ and $w$ are partial pure children of $k$.    
\end{lemma}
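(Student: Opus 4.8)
The plan is to exploit the block structure of the latent total-effect matrix $(I-A)^{-1}$ and then read off the rank of $B_{\{v,w\},\cL}$ from the combinatorics of the shared parent sets, using rank faithfulness to pass between the actual rank and the generic one. First I would record the shape of a single row $B_{\{v\},\cL}$. Writing $B=G(I-A)^{-1}$ and recalling that the third condition in Definition~\ref{def:multi-domain-graph} forbids edges between $\cL$ and any $I_e$, as well as between distinct $I_e,I_f$, the matrix $A$ is block diagonal in the ordering $(\cL,I_1,\dots,I_m)$. Hence $(I-A)^{-1}$ is block diagonal, its $\cL\times\cL$ block equals $M:=(I-A_{\cL,\cL})^{-1}$ and is invertible, and every entry $\big((I-A)^{-1}\big)_{jk}$ with $j$ domain-specific and $k\in\cL$ vanishes. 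Setting $P_v:=\pa(v)\cap\cL$, this yields
\[
B_{\{v\},\cL}=\sum_{j\in P_v} G_{vj}\,M_{j,\cL},
\]
so that $B_{\{v\},\cL}=c_v^\top M$ with $c_v\in\mathbb{R}^{\cL}$ supported on $P_v$ and entries $G_{vj}$. Because $M$ is invertible, its rows form a basis and $\rk(B_{\{v,w\},\cL})=\rk\begin{pmatrix} c_v^\top\\ c_w^\top\end{pmatrix}$ for every pair $v,w$.

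For the ``if'' direction, if $v,w$ are both partial pure children of some $k\in\cL$ then $P_v=P_w=\{k\}$, so $c_v,c_w$ are scalar multiples of the basis vector $e_k$ and both rows of $B_{\{v,w\},\cL}$ are multiples of $M_{k,\cL}$; hence the rank is at most $1$. The hypothesis that $B_{\cL}$ has no zero row forces $B_{\{v\},\cL}\neq 0\neq B_{\{w\},\cL}$, so the rank is exactly $1$.

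For the ``only if'' direction I would argue through Condition~\ref{ass:rank-faithfulness}. The support pattern $D_{V\cH},D_{\cH\cH}$ is fixed by the graph while the nonzero entries vary freely, so as $B'$ ranges over $\im(\phi_{\cG_{m}})$ the vector $c_v$ ranges over all of $\mathbb{R}^{P_v}$, $c_w$ over all of $\mathbb{R}^{P_w}$, and the $\cL$-block of $(I-A')^{-1}$ stays invertible. Consequently $\rk(B'_{\{v,w\},\cL})=\rk\begin{pmatrix}c_v^\top\\ c_w^\top\end{pmatrix}$ for every parameter choice, and its maximum over all choices is exactly the generic rank of a pair of vectors supported on $P_v$ and $P_w$. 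A short case analysis shows this generic rank equals $1$ precisely when $P_v=P_w$ is a single common index: disjoint supports give independent nonzero rows, a shared index accompanied by an extra free coordinate in one support breaks proportionality, and two generic vectors on a common support of size at least two are independent. Rank faithfulness identifies $\rk(B_{\{v,w\},\cL})$ with this generic maximum, so $\rk(B_{\{v,w\},\cL})=1$ together with the no-zero-row hypothesis (which rules out $P_v=\emptyset$ or $P_w=\emptyset$) forces $P_v=P_w=\{k\}$, i.e.\ $v$ and $w$ are partial pure children of the same $k\in\cL$.

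I expect the main obstacle to be the ``only if'' direction: justifying cleanly that the maximum rank over $\im(\phi_{\cG_{m}})$ coincides with the generic rank of the support configuration (using lower semicontinuity of rank and the persistent invertibility of the latent $\cL$-block), and then pinning down the case analysis that this generic rank drops to $1$ exactly for matching singleton shared-parent sets. The ``if'' direction and the reduction via block-diagonality are routine once this structure is in place.
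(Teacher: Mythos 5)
Your proof is correct, and its second half takes a genuinely different route from the paper's. The ``if'' direction is essentially identical in both: a partial pure child $v$ of $k$ has $B_{v,\cL}=G_{vk}[(I-A)^{-1}]_{k,\cL}$, so the two rows are proportional, and the no-zero-rows hypothesis upgrades $\rk\le 1$ to $\rk=1$. For the ``only if'' direction, the paper invokes the Lindstr\"om--Gessel--Viennot lemma (following the argument of Dai et al.) to identify the generic rank of $B_{\{v,w\},\cL}$ with the minimum size of a vertex cut from $\anc(\cL)=\cL$ to $\{v,w\}$, and then reads off that a size-one cut forces a common unique shared parent. You instead exploit the block-diagonality of $(I-A)^{-1}$ guaranteed by the third condition of Definition \ref{def:multi-domain-graph} to write $B_{\{v,w\},\cL}$ as a $2\times\ell$ coefficient matrix (rows supported on $P_v=\pa(v)\cap\cL$ and $P_w$) times the invertible factor $(I-A_{\cL,\cL})^{-1}$, so the rank is that of the coefficient matrix for every parameter choice; the maximum over $\im(\phi_{\cG_{m}})$ is then computed by an elementary case analysis on the supports, and Condition \ref{ass:rank-faithfulness} transfers this maximum to the given $B$. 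Both arguments use rank faithfulness in exactly the same way; the difference is that your computation of the generic rank is direct and self-contained, which works cleanly here because $\anc(\cL)=\cL$ and all latent-to-observed influence passes through a single mixing edge, whereas the paper's trek-separation tool is the one that would generalize to settings with deeper ancestral structure between latents and observables.
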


The condition on no zero-rows in Lemma  \ref{lem:pure-children} is needed since we always have $\rk(B_{\{v,w\},\cL}) \leq 1$  if one of the two rows is zero. However, this is no additional structural assumption since we allow zero-rows when identifying the latent graph; c.f.~Algorithm \ref{alg:id-graph}. The lemma allows us to find partial pure children by testing ranks on the matrix $\widehat{B}_{\cL}$.  If $(i_1,j_1)$ and $(i_2,j_2)$ are partial pure children of two nodes in $\cL$, we make sure that these two nodes are different by checking that $\rk(B_{\{i_1,i_2\},\cL})=2$. \looseness=-1

For a DAG $G=(V,D)$, we define $\mathcal{S}(G)$ to be the set of permutations on $|V|$ elements that are consistent with the DAG, i.e., $\sigma \in \mathcal{S}(G)$ if and only if $\sigma(i) < \sigma(j)$ for all edges $i \rightarrow j \in D$. 
The following result is the main result of this section. 

\begin{theorem} \label{thm:graph-identifiability}
Let $\widehat{B} = B\Psi$ with $B \in \im(\phi_{\cG_{m}})$ and $\Psi \in \Pi$, and define $B^{\ast}=\widehat{B}_{\cL}$ to be the input of Algorithm \ref{alg:id-graph}. Assume that Conditions \ref{ass:pure-children} and \ref{ass:rank-faithfulness} are satisfied, and let $\widehat{A}$ be the output of Algorithm \ref{alg:id-graph}. Then $\widehat{A}=Q_{\sigma}^{\top} A_{\cL,\cL} Q_{\sigma}$ for a signed permutation matrix $Q_{\sigma}$ with $\sigma \in \mathcal{S}(\cG_{\cL})$.
Moreover, if $G_{vk} > 0$ for $G \in \mathbb{R}^{D_{V \cH}}$ whenever  $v$ is a pure child of $k$, then $Q_{\sigma}$ is a permutation matrix.
\end{theorem}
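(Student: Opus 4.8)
The plan is to reduce everything to the shared subsystem and then read the latent graph off the pure-child rows detected by Lemma~\ref{lem:pure-children}. First I would use the third condition of Definition~\ref{def:multi-domain-graph}: since there are no edges between $\cL$ and $\cup_e I_e$, we have $A_{\cL,\cH\setminus\cL}=0$ and $A_{\cH\setminus\cL,\cL}=0$, so $A$ is block diagonal along the partition of $\cH$ into $\cL$ and the domain-specific nodes. Then $(I-A)^{-1}$ is block diagonal as well, and the columns of $B$ indexed by $\cL$ collapse to $B_{\cL}=G_{V,\cL}\,M$ with $M:=(I-A_{\cL,\cL})^{-1}$. Because the shared nodes are topologically ordered, $A_{\cL,\cL}$ is strictly lower triangular and $M$ is lower triangular with unit diagonal, and under Condition~\ref{ass:rank-faithfulness} the support of $M_{k,\cdot}$ is exactly $k$ together with its $\cL$-ancestors. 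A partial pure child $v$ of $k$ has $G_{v,\cL}=G_{vk}\,e_k^\top$, so its row of $B_{\cL}$ equals $G_{vk}\,M_{k,\cdot}$, a nonzero scalar multiple of the $k$-th row of $M$; in the input $B^{\ast}=B_{\cL}\Psi_{\cL}$ it appears as $G_{vk}\,M_{k,\cdot}\Psi_{\cL}$.

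Next I would invoke Lemma~\ref{lem:pure-children}. Because $\Psi_{\cL}$ is a signed permutation, $\rk(B^{\ast}_{\{v,w\}})=\rk(B_{\{v,w\},\cL})$, so a rank-one test on pairs of nonzero rows detects exactly the pairs that are partial pure children of a common shared node, and rank-two tests separate distinct nodes. As every node in $V$ can be a partial pure child of at most one element of $\cL$, ``sharing a pure parent'' is an equivalence relation on the pure-child rows, and Condition~\ref{ass:pure-children} guarantees that each of the $\ell$ shared nodes produces a class of size at least two; thus the algorithm groups the rows correctly and selects one representative per shared node. Stacking the representatives gives rows that are scalar multiples of distinct rows of $M$, subject to an unknown row permutation (which representative is which node) and the column signed permutation $\Psi_{\cL}$. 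Exploiting that $M$ is lower triangular with unit diagonal and ancestral support, I would recover the simultaneous row/column reordering and row rescalings that return it to lower-triangular, unit-diagonal form; this reconstructs $M$ up to conjugation by a signed permutation, i.e.\ $\widehat{M}=Q_{\sigma}^{\top}M Q_{\sigma}$ with $Q_{\sigma}=D_{\varepsilon}P_{\pi}$. Then $\widehat{A}=I-\widehat{M}^{-1}=Q_{\sigma}^{\top}A_{\cL,\cL}Q_{\sigma}$, and since both $A_{\cL,\cL}$ and the reconstructed $\widehat{A}$ are lower triangular, $\sigma$ carries one topological order of $\cG_{\cL}$ to another, i.e.\ $\sigma\in\mathcal{S}(\cG_{\cL})$.

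For the ``moreover'' claim the main obstacle is pinning down the sign part $D_{\varepsilon}$ of $Q_{\sigma}$, which encodes the intrinsic gauge freedom of the model: replacing $(A_{\cL,\cL},G_{V,\cL})$ by $(S A_{\cL,\cL}S,\,G_{V,\cL}S)$ for a sign matrix $S$ leaves $B_{\cL}$ unchanged up to column signs absorbed into $\Psi_{\cL}$, so from $B^{\ast}$ alone the orientation of each latent variable is undetermined. The resolution is that the loadings implied by the factorization $B^{\ast}=\widehat{G}\,\widehat{M}$ satisfy $\widehat{G}=G_{V,\cL}\Psi_{\cL}$, so the recovered pure-child loading of node $k$ carries sign $\varepsilon_{k}\,\text{sign}(G_{vk})$. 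Fixing the per-source sign by the convention that pure-child loadings are positive multiplies the source at position $\pi(k)$ by $\varepsilon_{k}\,\text{sign}(G_{vk})$ and replaces $Q_{\sigma}$ by $Q_{\sigma}'$ with $(Q_{\sigma}')_{k,\pi(k)}=\text{sign}(G_{vk})$; note these signs are intrinsic to the true model and, crucially, independent of the unknown $\Psi_{\cL}$, which already yields the main statement. Under the hypothesis $G_{vk}>0$ all these signs are $+1$, so $Q_{\sigma}'=P_{\pi}$ is a genuine permutation and $\widehat{A}=P_{\pi}^{\top}A_{\cL,\cL}P_{\pi}$. The delicate point is exactly that positivity of the true loadings is preserved only by the pure-permutation subgroup of the signed-permutation gauge---any nontrivial column sign flip would negate a positive pure-child entry---so the positivity convention selects a unique sign representative and removes the indeterminacy.
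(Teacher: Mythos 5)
Your proposal is correct and follows essentially the same route as the paper's proof: reduce to $B_{\cL}=G_{V,\cL}(I-A_{\cL,\cL})^{-1}$, detect pairs of partial pure children via the rank-one/rank-two tests of Lemma \ref{lem:pure-children}, permute the selected representative rows and the columns to lower-triangular form, normalize signs and scales, and invert, with the positivity hypothesis killing the residual diagonal sign matrix exactly as in the paper. The only step you assert rather than prove---that any row/column permutation pair restoring lower-triangularity must be a conjugation by a single permutation lying in $\mathcal{S}(\cG_{\cL})$---is precisely the content of the paper's auxiliary Lemma \ref{lem:lower-triangular} (using the nonzero diagonal to force $\sigma_2=\sigma_1^{-1}$ and the path-support of $(I-A_{\cL,\cL})^{-1}$ to force $\sigma_2\in\mathcal{S}(\cG_{\cL})$), and would need to be spelled out in a full write-up.
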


\begin{algorithm}[tb] 
\begin{algorithmic}[1]
    \STATE {\bfseries Input:} Matrix $B^{\ast} \in \mathbb{R}^{|V| \times \ell}$. 
    \STATE {\bfseries Output:} Parameter matrix $\widehat{A} \in \mathbb{R}^{\ell \times \ell}$.
    \STATE Remove rows $B^{\ast}_{i,\cL}$ from the matrix $B^{\ast}$ that are completely zero. \alglinelabel{line:remove-zero-rows}
    \STATE Find tuples $(i_k, j_k)_{k \in \cL}$ with $i_k \neq j_k$ such that
    \begin{itemize}
        \item[(i)] $\rk(B^{\ast}_{\{i_k, j_k\}, \cL})=1$ for all $k \in \cL$ and 
        \item[(ii)] $\rk(B^{\ast}_{\{i_k, i_q\}, \cL})=2$ for all $k,q \in \cL$ with $k \neq q$. 
    \end{itemize} \alglinelabel{line:find-pure-children}
    \STATE Let $I=\{i_1, \ldots, i_{\ell}\}$ and consider $B^{\ast}_{I,\cL} \in \mathbb{R}^{\ell \times \ell}$. \alglinelabel{line:define-W}
    \STATE \alglinelabel{line:order} Find two permutation matrices $R_1$ and $R_2$ such that $W=R_1 B^{\ast}_{I,\cL} R_2$ is lower triangular.
    \STATE Multiply each column of $W$ by the sign of its corresponding diagonal element. This yields a new matrix $\widetilde{W}$ with all diagonal elements positive. \alglinelabel{line:multiply-columns}
    \STATE Divide each row of $\widetilde{W}$ by its corresponding diagonal element. This yields a new matrix $\widetilde{W}^{\prime}$ with all diagonal elements equal to one. \alglinelabel{line:multiply-rows}
    \STATE Compute $\widehat{A}=I-(\widetilde{W}^{\prime})^{-1}$.
     \STATE {\bfseries return} $\widehat{A}$.
\end{algorithmic}
   \caption{IdentifySharedGraph}
   \label{alg:id-graph}
\end{algorithm}
Theorem \ref{thm:graph-identifiability} says that the graph $\cG_{\cL}$ can be recovered up to a permutation of the nodes that preserves the property that $i \rightarrow j$ implies $i < j$; see Remark \ref{rem:graph-recovery}.  Since the columns of the matrix $\widehat{B}$ are not only permuted but also of different signs, we solve the sign indeterminacy column-wise in Line \ref{line:multiply-columns} before removing the scaling indeterminacy row-wise in Line \ref{line:multiply-rows}. In case the coefficients of partial pure children are positive, this ensures that $Q_{\sigma}$ is a \emph{permutation matrix} and we have no sign indeterminacy. 
In Appendix \ref{sec:empirical-algo} we adapt Algorithm \ref{alg:id-graph} for the empirical data setting, where we only have  $\widehat{B}_{\cL} \approx B_{\cL} \psi_{\cL}$. \looseness=-1
\begin{remark} \label{rem:graph-recovery}
Let $\widehat{A}$ be the output of Alg. \ref{alg:id-graph}. Then we construct the graph $\widehat{G}_{\cL}=(\cL,\widehat{D}_{\cL \cL})$ as the graph with edges $j \rightarrow i \in \widehat{D}_{\cL \cL}$ if and only if $\widehat{A}_{ij}\neq 0$. Condition \ref{ass:rank-faithfulness} ensures that $\widehat{G}_{\cL}$ is equivalent to $\cG_{\cL}$ in the sense that there is a permutation $\sigma \in \mathcal{S}(\cG_{\cL})$ such that $\widehat{D}_{\cL \cL} = \{ \sigma(i) \rightarrow \sigma(j): i \rightarrow j \in D_{\cL \cL}\}$.
\end{remark}
\begin{example}
    As highlighted in the introduction, the unpaired multi-domain setup is motivated by applications from single-cell biology. For example, consider the domains of (i) gene expression and (ii) high-level phenotypic features extracted from imaging assays (e.g. \citeauthor{mcquin2018}, \citeyear{mcquin2018}). We argue that the requirement of two partial pure children is justifiable on such data as follows. The condition requires, for example, that for each shared latent variable, (i) the expression of some gene depends only upon that shared latent variable plus domain-specific latent variables, and (ii) one of the high-level phenotypic features depends only on the same latent feature plus domain-specific latent variables. Many genes have highly specialized functions, so (i) is realistic, and similarly many phenotypic features are primarily controlled by specific pathways, so (ii) is justified.
\end{example}
\begin{remark}
    In Algorithm \ref{alg:id-graph}, we determine the rank of a matrix by Singular Value Decomposition, which has worst case complexity $\mathcal{O}(mn \min\{n,m\})$ for an $m \times n$ matrix. Since Line \ref{line:find-pure-children} is the dominant part, we conclude that the worst case complexity of Algorithm \ref{alg:id-graph} is $\mathcal{O}(|V|^2 \cdot \ell)$.
\end{remark}

\section{Simulations} \label{sec:simulations}
In this section we report on a small simulation study to illustrate the validity of our adapted algorithms for finite samples (detailed in Appendix \ref{sec:empirical-algo}). 
We emphasize that this should only serve as a proof of concept as the focus of our work lies on identifiability. In future work one may develop more sophisticated methods; c.f. Appendix \ref{sec:future-work}. 
The adapted algorithms have a hyperparameter $\gamma$, which is a threshold on singular values to determine the rank of a matrix. 
In our simulations we use $\gamma=0.2$. 
%

\textbf{Data Generation.} 
In each experiment we generate $1000$ random models with $\ell=3$ shared latent nodes. We consider different numbers of domains $m \in \{2,3\}$ and assume that there are $|I_e| = 2$ domain-specific latent nodes for each domain. The dimensions are given by $d_e = d/m$ for all $e \in [m]$ and $d=30$. We sample the $m$-domain graph $\cG_{m}$ on the shared latent nodes  as follows. First, we sample the graph $\cG_{\cL}$ from an Erd\H{o}s-R\'{e}nyi model with edge probability $0.75$ and assume that there are no edges between other latent nodes, that is, between $\cL$ and $\cH \setminus \cL$ and within $\cH \setminus \cL$. Then we fix  two partial pure children for each shared latent node $k \in \cL$ and collect them in the set $W$. The remaining edges from $\cL$ to $V\setminus W$ and from $\cH$ to $V$  are  sampled from an Erd\H{o}s-R\'{e}nyi model with  edge probability $0.9$.  Finally, the (nonzero) entries of $G$ and $A$ are sampled from $\text{Unif}(\pm [0.25, 1])$.  The distributions of the error variables are specified in Appendix \ref{sec:error-distributions}. For simplicity, we assume that the sample sizes coincide, that is, $n_e = n$ for all $e \in [m]$, and consider $n \in \{1000, 2500, 5000, 10000, 25000\}$. \looseness=-1%

\textbf{Results.} First, we plot the average number of shared nodes $\widehat{\ell}$ in our experiments in Figure  \ref{fig:simulation-results} (a). Especially for low sample sizes, we see that fewer shared nodes are detected with more domains. 
However, by inspecting the error bars we also see that the probability of detecting too many nodes $\widehat{\ell} > \ell$ decreases drastically when considering $3$ instead of $2$ domains.
This suggests that the number of falsely detected shared nodes is very low, as expected by Theorem \ref{thm:false-discovery}. Our findings show that more domains lead to a more conservative discovery of shared nodes, but whenever a shared node is determined this is more certain.
Moreover, we measure the error in estimating $\widehat{B}_{\widehat{\cL}}$ in Figure \ref{fig:simulation-results} (b), that is, the error in the ``shared'' columns. We take \looseness=-1
\[
    \text{score}_B(\widehat{B}_{\widehat{\cL}}) = 
    \begin{cases}
     \displaystyle \min_{\Psi \in SP(\ell)} \, \scriptstyle \beta_{\ell, \hat{\ell}}^{-1/2} \|\widehat{B}_{\widehat{\cL}} - [B_{\cL} \Psi]_{\widehat{\cL}}\|_{\cF} & \text{if } \widehat{\ell} \leq \ell, \\
    \displaystyle \min_{\Psi \in SP(\widehat{\ell})} \, \scriptstyle \beta_{\ell, \hat{\ell}}^{-1/2} \|[\widehat{B}_{\widehat{\cL}} \Psi]_{\cL} - B_{\cL}\|_{\cF} & \text{if } \widehat{\ell} > \ell,
    \end{cases}
\]
where $\|\cdot\|_{\cF}$ denotes the Frobenius norm and $\beta_{\ell, \hat{\ell}} = \min\{\ell, \widehat{\ell}\} \cdot \sum_{e=1}^m d_e$ denotes the number of entries of the matrix over which the norm is taken. 
In the cases $\ell = \widehat{\ell}$, we also measure the performance of recovering the shared latent graph $\cG_{\cL}$ in Figure \ref{fig:simulation-results} (c) by taking
\[
    \text{score}_A(\widehat{A}) = \min_{Q_{\sigma} \in SP(\ell) \text{ s.t. } \sigma \in S(\cG_{\cL})} \frac{1}{\ell} \|Q_{\sigma}^{\top}\widehat{A}Q_{\sigma} - A_{\cL,\cL}\|_{\cF}.
\]
As expected, the median estimation errors for $B_{\cL}$ and $A_{\cL,\cL}$ decrease with increasing sample size.  
In Appendix \ref{sec:additional-simulation-results} we provide additional simulations with larger $\ell$. Moreover, we consider setups where we violate specific assumptions, such as pairwise different distributions \ref{ass:distributions} and two partial pure children \ref{ass:pure-children}. The results emphasize that the conditions are necessary for the algorithms provided. 
The computations were performed on a single thread of an Intel Xeon Gold $6242$R processor ($3.1$ GHz), with a total computation time of $12$ hours for all simulations presented in this paper (including Appendix).
\looseness=-1

\begin{figure}[tb]
\captionsetup[subfigure]{labelformat=empty}
\centering
\begin{subfigure}{0.325\linewidth}
\centering
\includegraphics[width=\linewidth]{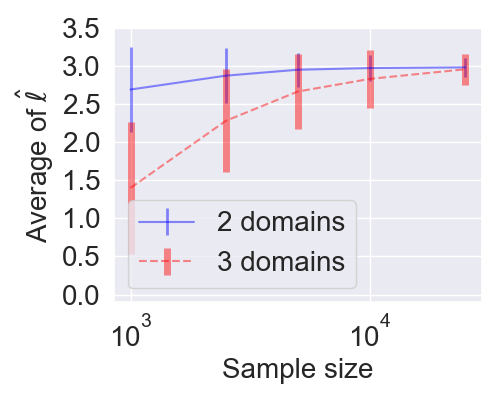}
\caption{(a)}
\end{subfigure}
\hfill
\begin{subfigure}{0.325\linewidth}
\centering
\includegraphics[width=\linewidth]{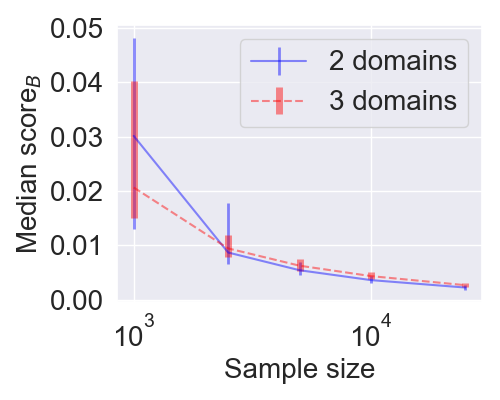}
\caption{(b)}
\end{subfigure}
\hfill
\begin{subfigure}{0.325\linewidth}
\centering
\includegraphics[width=\linewidth]{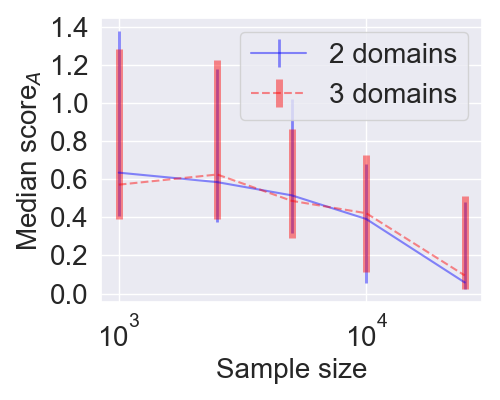}
\caption{(c)}
\end{subfigure}
\caption{\textbf{Results.} 
Logarithmic scale on the $x$-axis. Error bars in (a) are one standard deviation of the mean and in (b) and (c) they are the interquartile range.}
\label{fig:simulation-results}
\end{figure}

\section{Discussion} \label{sec:discussion}

This work introduces the problem of causal representation learning from \emph{unpaired} multi-domain observations, in which multiple domains provide complementary information about a set of shared latent nodes that are the causal quantities of primary interest.  For this problem, we laid out a setting in which we can provably identify the causal relations among the shared latent nodes.  
To identify the desired causal structure, we proposed a two-step approach where we first make use of linear ICA in each domain separately and match the recovered error distributions to identify shared nodes and the joint distribution of the domains. In the second step, we identify the causal structure among the shared latent variables by testing rank deficiencies in the ``overall mixing matrix'' $B$.   To the best of our knowledge, our guarantees are the first principled identifiability results for shared causal representations in a general, unpaired multi-domain setting.

We proposed algorithms for recovering the joint distribution and the shared latent space making our proofs constructive. While our focus is on identifiability guarantees, we show in Appendix \ref{sec:empirical-algo} how our proofs give rise to algorithms for the finite sample setting.  Moreover, we propose a method to match approximate error distributions and show that the probability of falsely discovering shared nodes decreases exponentially in the number of domains.  Our work opens up numerous directions for future work as we discuss in Appendix \ref{sec:future-work}.

\begin{ack}
This project was initiated while the first author was a visitor at the Eric and Wendy Schmidt Center of the Broad Institute of MIT and Harvard. The project has received funding from the European Research Council (ERC) under the European Union’s Horizon 2020 research and innovation programme (grant agreement No 883818), NCCIH/NIH (1DP2AT012345), ONR (N00014-22-1-2116), DOE-ASCR (DE-SC0023187), NSF (DMS-1651995), the MIT-IBM Watson AI Lab, and a Simons Investigator Award. 
Nils Sturma acknowledges support by the Munich Data Science Institute  at the Technical University of Munich via the Linde/MDSI PhD Fellowship program. 
Chandler Squires was partially supported by  an  NSF  Graduate  Research  Fellowship.
\end{ack}



{
\small
\bibliographystyle{apalike}  
\bibliography{literature}
}

\newpage
\appendix

\section{Proofs} \label{sec:proofs}

\begin{proof}[Proof of Theorem \ref{thm:main-theorem}]
Let $P_X \in \cM(\cG_{m})$ for an $m$-domain graph $\cG_{m}=(\cH \cup V, D)$ with shared latent nodes $\cL=[\ell]$ and representation $(B,P)$.  By Condition \ref{ass:distributions} the measure $P$ has independent, non-degenerate, non-symmetric marginals $P_i$, $i \in \cH$ with mean zero and variance one. Moreover, since $B \in \im(\phi_{\cG_{m}})$, we have $B=G(I-A)^{-1}$ for matrices $G \in \mathbb{R}^{D_{V \cH}}$ and $A \in \mathbb{R}^{D_{\cH \cH}}$. 

Fix one domain $e \in [m]$.  Recall that we denote by $S_e = \pa(V_e) = \cL \cup I_e$ the set of latent parents in domain $e$.  Define the matrix
\[
    B^e := G_{V_e, S_e} [(I-A)^{-1}]_{S_e,S_e} = G^e [(I-A)^{-1}]_{S_e,S_e},
\]
and observe that we can write $P_{X^e} = B_{V_e, \cH} \# P = B^e \# P_{S_e}$. This is due to the fact that $G_{V_e, \cH \setminus S_e} = 0$ and $[(I-A)^{-1}]_{S_e,\cH \setminus S_e} = 0$ by the definition of an $m$-domain graph. 

In particular, the equality $P_{X^e} = B^e \# P_{S_e}$ shows that the representation in Line \ref{line:linear-ica} of Algorithm \ref{alg:id-joint-distr} exists. Now, we show that it is unique up to signed permutation by applying results on identifiability of linear ICA. Since $G^e$ has full column rank by Condition \ref{ass:full-rank} and $[(I-A)^{-1}]_{S_e,S_e}$ is invertible, the matrix $B^e$ also has full column rank. Let $P_{X^e} = \widehat{B}^e \# P^e$ be any representation, where $\widehat{B}^e \in \mathbb{R}^{d_e \times \widehat{s}_e}$ and $P^e$ is an $\widehat{s}_e$-dimensional probability measure with independent, non-degenerate marginals $P^e_i$. Due to Condition \ref{ass:distributions}, all probability measures $P_i$ are non-Gaussian and non-degenerate and therefore we have by \citet[Theorem 3 and 4]{eriksson2004identifiability} the identities
\begin{equation} \label{eq:id-linear-ica}
    \widehat{B}^e = B^e R^e \Lambda^e \quad \text{and} \quad P^e =  \Lambda^e (R^e)^{\top} \# P_{S_e},
\end{equation}
where $\Lambda^e$ is an $s_e \times s_e$ diagonal matrix with nonzero entries and $R^e$ is an $s_e \times s_e$ permutation matrix. In particular, we have $\widehat{s}_e = s_e$, which means that  $\widehat{B}^e \in \mathbb{R}^{d_e \times s_e}$ and that $P^e$ is an $s_e$-dimensional probability measure. Line \ref{line:linear-ica} also requires that each marginal $P^e_i$ has unit variance. This removes the scaling indeterminacy in \eqref{eq:id-linear-ica} and we have
\[
    \widehat{B}^e 
    = B^e R^e D^e \quad \text{and} \quad 
    P^e 
    =  D^e (R^e)^{\top} \# P_{S_e},
\]
where $D^e$ is a diagonal matrix with entries $D_{ii}^e \in \{\pm 1\}$. In particular, this means that the distributions $P^e$ and $P_{S_e}$ coincide up to permutation and sign of the marginals.

The matching in Line \ref{line:matching} identifies which components of $P^e$ are shared. By Condition \ref{ass:distributions}, two components of different domains $P^e_i$ and $P^f_j$ are shared if and only if they coincide up to sign, that is, if and only if $d(P^e_i, P^f_j) = 0$ or  $d(P^e_i, -P^f_j) = 0$. If their distribution coincide up to sign, than either $d(P^e_i, P^f_j) = 0$ or $d(P^e_i, -P^f_j) = 0$ but not both since Condition \ref{ass:distributions} requires the distribution of the error variables to be non-symmetric. We conclude that in each domain $e \in [m]$ there exists an $s_e \times s_e$ signed permutation matrix $Q^e$ such that
\begin{equation} \label{eq:permutation-matching}
    d([(Q^e)^{\top} \# P^e]_i, [(Q^f)^{\top} \# P^f]_i) = 0
\end{equation}
    
for all $i=1, \ldots, \widehat{\ell}$ and for all $f \neq e$. In particular, $\widehat{\ell} = \ell$ and $\widehat{\cL} = \cL$.

It remains to show that $\widehat{B} = B \Psi$ and $\widehat{P} = \Psi^{\top} \# P$ for a signed permutation block matrix $\Psi \in \Pi$. By Equation \eqref{eq:permutation-matching}, the distributions $[(Q^e)^{\top} \# P^e]_{\cL}$ and $[(Q^e)^{\top} \# P^e]_{\cL}$ coincide, which means that
\begin{equation} \label{eq:perm-block-structure}
    (Q^e)^{\top} \# P^e = (Q^e)^{\top} D^e (R^e)^{\top} \# P_{S_e} = 
    \begin{pmatrix}
    \Psi_{\cL}^{\top} & 0\\
    0 & \Psi_{I_e}^{\top}
    \end{pmatrix}
    \#
    \begin{pmatrix}
        P_{\cL} \\
        P_{I_e}
    \end{pmatrix},
\end{equation}
where $\Psi_{\cL}$ is an $\ell \times \ell$ signed permutation matrix and $\Psi_{I_e}$ is an $|I_e| \times |I_e|$ signed permutation matrix. 
Importantly, the matrix $\Psi_{\cL}^{\top}$ does not depend on the domain $e \in [m]$. Hence, the matrix $\Phi^e := R^e D^e Q^e$ is a signed permutation matrix with block structure as in Equation \eqref{eq:perm-block-structure}. 
Moreover, we have
\[
    \widehat{B}^e Q^e = B^e R^e D^e  Q^e = B^e \Phi^e = 
    \begin{pNiceArray}{c|c}
    B^{e}_{\cL} \Psi_{\cL} & B^{e}_{[s_e]\setminus \cL} \Psi_{I_e}
    \end{pNiceArray},
\]
which means that the matrix $\widehat{B}$ can be factorized as
\begin{align*}
    \widehat{B} &= 
    \begin{pNiceArray}{c|ccc}[parallelize-diags=false]
        [\widehat{B}^1  Q^1]_{\widehat{\cL}}& [\widehat{B}^1  Q^1]_{[\widehat{s}_1] \setminus \widehat{\cL}}  &  &  \\
        \vdots & & \ddots & \\
        [\widehat{B}^m  Q^m]_{\widehat{\cL}} & &  & [\widehat{B}^m  Q^m]_{[\widehat{s}_m] \setminus \widehat{\cL}} 
    \end{pNiceArray} \\
    &= 
    \begin{pNiceArray}{c|ccc}[parallelize-diags=false]
        B^{1}_{\cL} \Psi_{\cL}& B^{1}_{[s_1]\setminus \cL} \Psi_{I_1} &  &  \\
        \vdots & & \ddots & \\
         B^{m}_{\cL} \Psi_{\cL} & &  & B^{m}_{[s_m]\setminus \cL} \Psi_{I_m}
    \end{pNiceArray} \\
    &=
    \begin{pNiceArray}{c|ccc}[parallelize-diags=false]
        B^{1}_{\cL}& B^{1}_{[s_1]\setminus \cL} &  &  \\
        \vdots & & \ddots & \\
         B^{m}_{\cL} & &  & B^{m}_{[s_m]\setminus \cL} 
    \end{pNiceArray}
    \cdot
    \begin{pmatrix}
        \Psi_{\cL} & & & \\
        & \Psi_{I_1} & & \\
        & & \ddots & \\
        & & & \Psi_{I_m}
    \end{pmatrix}
    = B \cdot \Psi,
\end{align*}
where $\Psi \in \Pi$. Similarly, we have for all $e \in [m]$,
\[
    (Q^e)^{\top} \# P^e = (\Phi^e)^{\top} \# P_{S_e} = 
    \begin{pmatrix}
        (\Psi_{\cL})^{\top} \#P_{\cL} \\
        (\Psi_{I_e})^{\top} \#P_{I_e}
    \end{pmatrix}.
\]
We conclude that 
\begin{align*}
    \widehat{P}
    &=
    \begin{pmatrix}
    [(Q^1)^{\top}\#P^1]_{\widehat{\cL}} \\ 
    [(Q^1)^{\top}\#P^1]_{[\widehat{s}_1] \setminus \widehat{\cL}} \\
    \vdots \\
    [(Q^m)^{\top}\#P^m]_{[\widehat{s}_m] \setminus \widehat{\cL}}
    \end{pmatrix}
    =
    \begin{pmatrix}
    (\Psi_{\cL})^{\top} \#P_{\cL} \\ 
    (\Psi_{I_1})^{\top}\# P_{I_1} \\
    \vdots \\
    (\Psi_{I_m})^{\top}\# P_{I_m}
    \end{pmatrix} \\
   &= 
    \begin{pmatrix}
        \Psi_{\cL} & & & \\
        & \Psi_{I_1} & & \\
        & & \ddots & \\
        & & & \Psi_{I_m}
    \end{pmatrix}^{\top}
    \#
    \begin{pmatrix}
    P_{\cL} \\ 
    P_{I_1} \\
    \vdots \\
    P_{I_m}
    \end{pmatrix}
    =
    \Psi^{\top} \# P.
\end{align*}
\end{proof}

Before proving Lemma \ref{lem:pure-children} and Theorem \ref{thm:graph-identifiability} we fix some notation. Let $\cG_{m}=(V \cup \cH, D)$ be an $m$-domain graph. We denote  by $\anc(v)=\{k \in \cH : \text{there } \text{is } \text{a } \text{directed}$ $\text{path } k \rightarrow \cdots \rightarrow v \text{ in } \cG_{m}\}$ the ancestors of a node $v \in V$. For subsets $W \subseteq V$, we denote $\anc(W) = \bigcup_{v \in W} \anc(w)$. Moreover, for $\cL \subseteq \cH$ and $v \in V$, we write shortly $\pa_{\cL}(v) = \pa(v) \cap \cL$.

\begin{proof}[Proof of Lemma \ref{lem:pure-children}]
Let $B  \in \im(\phi_{\cG_{m}})$. Then we can write $B=G \cdot (I-A)^{-1}$ with 
\[
G = \begin{pNiceArray}{c|ccc}[margin,parallelize-diags=false] 
    G_{V_1,\cL}& G_{V_1,I_1}  &  &  \\
    \vdots & & \ddots & \\
    G_{V_m,\cL} & &  & G_{V_m,I_m}
    \end{pNiceArray}.
\]
Moreover, observe that, by the definition of an $m$-domain-graph, the matrix $B_{V,\cL}$ factorizes as
\[
B_{V,\cL} = G_{V,\cL} [(I-A)^{-1}]_{\cL,\cL}.
\]
Now, suppose that $i$ and $j$ are partial pure children of a fixed node $k \in \cL$. Then $\pa_{\cL}(i) = \{k\} = \pa_{\cL}(j)$. In particular, the only entry that may be nonzero in the row $G_{i,\cL}$ is given by $G_{ik}$ and the only entry that may be nonzero in the row $G_{j,\cL}$ is given by $G_{jk}$. Thus, we have 
\[
B_{i, \cL} = \sum_{q \in \cL} G_{iq} [(I-A)^{-1}]_{q,\cL} = G_{ik} [(I-A)^{-1}]_{k,\cL}.
\]
Similarly, it follows that $B_{j, \cL} = G_{jk} [(I-A)^{-1}]_{k,\cL}.$ This means that the row $B_{j, \cL}$ is a multiple of the row $B_{i, \cL}$ and we conclude that $\rk(B_{\{i,j\},\cL})\leq1$. Equality holds due to the faithfulness condition \ref{ass:rank-faithfulness} which implies that $B_{ik} \neq 0$ and $B_{jk} \neq 0$, i.e., $B_{\{i,j\},\cL}$ is not the null matrix.

For the other direction suppose that $\rk(B_{\{i,j\},\cL}) = 1$. By applying the Lindström-Gessel-Viennot Lemma \citep{gessel1985binomial, lindstrom1973on} equivalently as in \citet[Theorem 1 and 2]{dai2022independence}, it can be seen that 
\begin{equation} \label{eq:vertex-cuts}
    \rk(B_{\{i,j\},\cL}) \leq \min\left\{ |S| : S \text{ is a vertex cut from } \anc(\cL) \text{ to } \{i,j\} \right\},
\end{equation}
where $S$ is a vertex cut from $\anc(\cL)$ to $\{i,j\}$ if and only if there exists no directed path in $\cG_{m}$ from $\anc(\cL)$ to $\{i,j\}$ without passing through $S$. Moreover, equality holds in \eqref{eq:vertex-cuts} for generic (almost all) choices of parameters. Since we assumed rank faithfulness in Condition \ref{ass:rank-faithfulness} we exclude cases where the inequality is strict and therefore have equality.
By the definition of an $m$-domain graph we have that $\anc(\cL)=\cL$. Thus, if $\rk(B_{\{i,j\},\cL}) = 1$, there must be a single node $k \in \cL$ such that $\{k\}$ is a vertex cut from $\cL$ to $\{i,j\}$. But then it follows that $i$ and $j$ have to be partial pure children of $k$ by the definition of an $m$-domain graph and by using the assumption that there are no zero-rows in $B_{\cL}$.
\end{proof}

To prove Theorem \ref{thm:graph-identifiability} we need the following auxiliary lemma.

\begin{lemma} \label{lem:lower-triangular}
    Let $G=(V,D)$ be a DAG with topologically ordered nodes $V=[p]$ and let $M$ be a lower triangular matrix with entries $M_{ii} \neq 0$ for all $i=1, \ldots, p$ and  $M_{ij} \neq 0$ if and only if there is a directed path $j \rightarrow \cdots \rightarrow i$ in $G$. Let $Q_{\sigma_1}$ and $Q_{\sigma_2}$ be permutation matrices. Then the matrix $Q_{\sigma_1} M Q_{\sigma_2}$ is lower triangular if and only if $\sigma_2= \sigma_1^{-1}$ and $\sigma_2 \in \cS(G)$.
\end{lemma}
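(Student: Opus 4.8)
The plan is to reduce the triangularity of $Q_{\sigma_1} M Q_{\sigma_2}$ to a system of inequalities between $\sigma_1^{-1}$ and $\sigma_2$ evaluated on the support of $M$. First I would record the two elementary facts that drive everything. Left multiplication by $Q_{\sigma_1}$ permutes rows and right multiplication by $Q_{\sigma_2}$ permutes columns, so entrywise $(Q_{\sigma_1} M Q_{\sigma_2})_{ij} = M_{\sigma_1(i), \sigma_2^{-1}(j)}$; equivalently, the entry $M_{ab}$ is relocated to position $(\sigma_1^{-1}(a), \sigma_2(b))$ in the product. Second, by the hypothesis on $M$ we have $M_{ab} \neq 0$ if and only if $a=b$ or there is a directed path $b \rightarrow \cdots \rightarrow a$ in $G$ (so $b$ is an ancestor of $a$). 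Writing $N := Q_{\sigma_1} M Q_{\sigma_2}$, the matrix $N$ is lower triangular if and only if every nonzero entry of $M$ lands weakly below the diagonal, i.e. $M_{ab} \neq 0 \Rightarrow \sigma_1^{-1}(a) \geq \sigma_2(b)$.

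For the ``only if'' direction I would first exploit the nonzero diagonal of $M$. Taking $a=b$ in the displayed implication gives $\sigma_1^{-1}(a) \geq \sigma_2(a)$ for every $a \in [p]$. Summing over $a$, both sides equal $1 + \cdots + p$ because $\sigma_1^{-1}$ and $\sigma_2$ are permutations, so the inequalities must hold with equality termwise, forcing $\sigma_1^{-1} = \sigma_2$. This averaging step is the crux of the argument and the part I expect to require the most care to phrase cleanly. Once $\sigma_2 = \sigma_1^{-1}$ is established, I would feed in the off-diagonal support of $M$: for each edge $a \rightarrow b \in D$ the entry $M_{ba}$ is nonzero, so the implication yields $\sigma_1^{-1}(b) \geq \sigma_2(a)$, i.e. $\sigma_2(b) \geq \sigma_2(a)$; since $a \neq b$ and $\sigma_2$ is injective this sharpens to $\sigma_2(a) < \sigma_2(b)$, which is exactly the statement $\sigma_2 \in \cS(G)$.

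For the ``if'' direction I would substitute $\sigma_2 = \sigma_1^{-1}$ to obtain $N_{ij} = M_{\sigma_1(i), \sigma_1(j)}$ and argue by contradiction: if $i < j$ but $N_{ij} \neq 0$, then $\sigma_1(j)$ is an ancestor of $\sigma_1(i)$ (the alternative $\sigma_1(i) = \sigma_1(j)$ is excluded since $i \neq j$). Because $\sigma_2 = \sigma_1^{-1} \in \cS(G)$ is consistent with edges, it is consistent with directed paths by transitivity, hence $\sigma_1^{-1}(\sigma_1(j)) < \sigma_1^{-1}(\sigma_1(i))$, i.e. $j < i$, contradicting $i < j$. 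Thus all super-diagonal entries of $N$ vanish and $N$ is lower triangular, completing the equivalence. The only subtlety worth flagging is keeping the two permutation conventions straight (rows move by $\sigma_1$, columns by $\sigma_2^{-1}$); once the relocation rule $M_{ab} \mapsto (\sigma_1^{-1}(a), \sigma_2(b))$ is fixed, everything else is bookkeeping together with the single counting argument.
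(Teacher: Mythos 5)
Your proposal is correct and follows essentially the same route as the paper's proof: the same entry-relocation identity, the same use of the nonzero diagonal to force $\sigma_1^{-1}(a)\ge\sigma_2(a)$ termwise and hence $\sigma_2=\sigma_1^{-1}$ by the permutation-sum argument (which the paper asserts and you helpfully make explicit), and the same use of edge supports to conclude $\sigma_2\in\cS(G)$. The converse direction is likewise the paper's argument, merely phrased as a contradiction rather than directly.
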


\begin{proof}[Proof of Lemma \ref{lem:lower-triangular}]
By the definition of a permutation matrix, we have 
\begin{equation} \label{eq:matzrix-QMQ}
    [Q_{\sigma_1} M Q_{\sigma_2}]_{ij} = M_{\sigma_1(i)\sigma_2^{-1}(j)} \, \, \text{ or, equivalently, } \, \, [Q_{\sigma_1} M Q_{\sigma_2}]_{\sigma_1^{-1}(i)\sigma_2(j)} = M_{ij}.
\end{equation}
First, suppose that $\sigma_2= \sigma_1^{-1}$ and $\sigma_2 \in \cS(G)$, and let $i,j \in [p]$ such that $\sigma_2(i) < \sigma_2(j)$. Then, by the definition of $\cS(G)$, there is no directed path $j \rightarrow \cdots \rightarrow i$ in the graph $G$ and therefore we have  $M_{ij}=0$. But this means that $[Q_{\sigma_1} M Q_{\sigma_2}]_{\sigma_2(i)\sigma_2(j)}=0$ and we conclude that the matrix $Q_{\sigma_1} M Q_{\sigma_2}$ is lower triangular.

Now, assume that $Q_{\sigma_1} M Q_{\sigma_2}$ is lower triangular, where $\sigma_1$ and $\sigma_2$ are arbitrary permutations on the set $[p]$. Since $M$ has no zeros on the diagonal, we have $M_{ii} = [Q_{\sigma_1} M Q_{\sigma_2}]_{\sigma_1^{-1}(i)\sigma_2(i)} \neq 0$ for all $i=1, \ldots, p$. It follows that $\sigma_1^{-1}(i) \geq \sigma_2(i)$ for all $i=1, \ldots, p$ because $Q_{\sigma_1} M Q_{\sigma_2}$ is lower triangular. But this is only possible if the permutations coincide on all elements, i.e., we have $\sigma_2 = \sigma_1^{-1}$. 
It remains to show that $\sigma_2= \sigma_1^{-1} \in \cS(G)$. For any edge $j\rightarrow i \in D$ we have that $M_{ij} \neq 0$. Recalling Equation \eqref{eq:matzrix-QMQ} this means that $[Q_{\sigma_1} M Q_{\sigma_2}]_{\sigma_2(i)\sigma_2(j)} \neq 0$. But since $Q_{\sigma_1} M Q_{\sigma_2}$ is lower triangular this can only be the case if $\sigma_2(j) < \sigma_2(i)$ which proves that $\sigma_2 \in \cS(G)$.
\end{proof}

\begin{proof}[Proof of Theorem \ref{thm:graph-identifiability}]
Each latent node in $\cL$ has two partial pure children by Condition \ref{ass:pure-children}. After removing zero-rows in Line \ref{line:remove-zero-rows} of Algorithm \ref{alg:id-graph} it holds by Lemma \ref{lem:pure-children} that $\rk(B^{\ast}_{\{i,j\}, \cL})=1$ if and only if there is a latent node in $\cL$ such that $i$ and $j$ are both partial pure children of that latent node. Hence, each tuple $(i_k, j_k)_{k \in \cL}$ in Line \ref{line:find-pure-children} of Algorithm \ref{alg:id-graph} consists of two partial pure children of a certain latent node. The requirement $\rk(B^{\ast}_{\{i_k,i_q\}, \cL})=2$ ensures that each pair of partial pure children has a different parent.

By the definition of an $m$-domain-graph and the fact that $B^{\ast} = \widehat{B}_{\cL}$, for $I = \{ i_1, \ldots, i_\ell\}$, we have the factorization
\begin{equation} \label{eq:factorization}
    B^{\ast}_{I,\cL} = B_{I,\cL} \Psi_{\cL} = G_{I,\cL} (I-A)^{-1}_{\cL,\cL} \Psi_{\cL} = G_{I,\cL} (I-A_{\cL,\cL})^{-1} \Psi_{\cL},
\end{equation}
where $G \in \mathbb{R}^{D_{V \cH}}$, $A \in \mathbb{R}^{D_{\cH \cH}}$ and $\Psi_{\cL}$ is a signed permutation matrix. Let $Q_1$ and $Q_2$ be permutation matrices and let $\Lambda$ be a diagonal matrix with non-zero diagonal elements and let $D$ be a diagonal matrix with entries in $\{\pm 1\}$. Then we can rewrite Equation \eqref{eq:factorization} as
\[
    B^{\ast}_{I,\cL} = Q_1 \underbrace{\Lambda (I-A_{\cL,\cL})^{-1} D}_{=:M} Q_2.
\]
Now, we apply Lemma \ref{lem:lower-triangular}. Since we assume throughout Section \ref{sec:joint-causal-graph} that the nodes $\cL$ are topologically ordered, the matrix $M$ is lower triangular with no zeros on the diagonal. Moreover, by Condition \ref{ass:rank-faithfulness} we have $M_{ij} \neq 0$ if and only if there is a directed path $j \rightarrow \cdots \rightarrow i$ in $\cG_{\cL}$. In Line \ref{line:order} we find other permutation matrices $R_1$ and $R_2$  such that 
\[
      W = R_1 B^{\ast}_{I,\cL} R_2 = (R_1 Q_1) M (Q_2 R_2)
\]
is lower triangular. Now, define the permutation matrices $Q_{\sigma_1} = R_1 Q_1$ and $Q_{\sigma_2} = Q_2 R_2$. Then we have by Lemma \ref{lem:lower-triangular} that $Q_{\sigma_1} = Q_{\sigma_2}^{\top}$ and that $\sigma_{2} \in \cS(\cG_{\cL})$. Hence, the matrix $W$ factorizes as
\[
    W = Q_{\sigma_2}^{\top} M  Q_{\sigma_2} = Q_{\sigma_2}^{\top} \Lambda (I-A_{\cL,\cL})^{-1} D  Q_{\sigma_2} =\widetilde{\Lambda} Q_{\sigma_2}^{\top}  (I-A_{\cL,\cL})^{-1} Q_{\sigma_2} \widetilde{D},
\]
where $\widetilde{\Lambda}$ and $\widetilde{D}$ are diagonal matrices with the entries given by permutations of the entries of $\Lambda$ and $D$. Lines \ref{line:multiply-columns} and \ref{line:multiply-rows} address the scaling and sign matrices $\widetilde{\Lambda}$ and $\widetilde{D}$. In particular,  we have that $\widetilde{W}^{\prime} = D^{\prime} Q_{\sigma_2}^{\top}  (I-A_{\cL,\cL})^{-1} Q_{\sigma_2}D^{\prime}$ for another diagonal matrix $D^{\prime}$ with entries in $\{\pm 1\}$, since each entry on the diagonal of $\widetilde{W}^{\prime}$ is equal to $1$. Thus, we have
\begin{align*}
    \widehat{A} 
    &= 
    I - (\widetilde{W}^{\prime})^{-1} 
    \\
    &= I - (D^{\prime}Q_{\sigma_2}^{\top}  (I-A_{\cL,\cL})^{-1} Q_{\sigma_2}D^{\prime})^{-1} 
    \\
    &= I - D^{\prime}Q_{\sigma_2}^{\top}  (I-A_{\cL,\cL}) Q_{\sigma_2D^{\prime}} 
    \\
    &= D^{\prime}Q_{\sigma_2}^{\top}  A_{\cL,\cL} Q_{\sigma_2}D^{\prime}.
\end{align*}
Since $Q_{\sigma_2}D^{\prime}$ is a signed permutation matrix with $\sigma_2 \in \cS(\cG_{\cL})$, the first part of the theorem is proved.
If $G_{vk} > 0$ whenever  $v$ is a pure child of $k$, the matrix $\widetilde{\Lambda}$ only has positive entries which means that $D^{\prime}$ is equal to the identity matrix. This proves the second part.
\end{proof}

\section{Additional Examples} \label{sec:examples}

The graph in Figure \ref{fig:multi-domain-graph} is an $m$-domain graph corresponding to the compact version in Figure \ref{fig:example} in the main paper.

\begin{example} \label{ex:matrix-B}
Consider the $m$-domain graph in Figure \ref{fig:multi-domain-graph}. The linear structural equation model among the latent variables is determined by lower triangular matrices of the form
    \[
        A = \begin{pmatrix}
        0 & 0 & 0 & 0 & 0 \\ 
        a_{21} & 0 & 0 & 0 & 0 \\ 
        0 & 0 & 0 & 0 & 0 \\ 
        0 & 0 & a_{43} & 0 & 0 \\ 
        0 & 0 & 0 & 0 & 0
        \end{pmatrix}.
    \]
Moreover, the domain-specific mixing matrices $G^e$ are of the form
\[
G^1 = \begin{pmatrix}
    g^1_{11} & g^1_{12} & g^1_{13} & 0 \\
    g^1_{21} & 0 & g^1_{23} & 0 \\
    0 & g^1_{32} & g^1_{33} & g^1_{34} \\
    g^1_{41} & g^1_{42} & g^1_{43} & g^1_{44}
\end{pmatrix}
\quad \text{and} \quad
G^2 = \begin{pmatrix}
    g^2_{11} & 0 & g^2_{13} \\
    g^2_{21} & g^2_{22} & g^2_{23} \\
    0& g^2_{32} & 0 \\
    g^2_{41} & g^2_{42} & g^2_{43} \\
    g^2_{51} & g^2_{52} & 0
\end{pmatrix}.
\]
Since the shared latent nodes are given by $\cL=\{1,2\}$, we have
\[
G = \begin{pmatrix}
    g^1_{11} & g^1_{12} & g^1_{13} & 0 & 0 \\
    g^1_{21} & 0 & g^1_{23} & 0 & 0 \\
    0 & g^1_{32} & g^1_{33} & g^1_{34} & 0 \\
    g^1_{41} & g^1_{42} & g^1_{43} & g^1_{44} & 0 \\
    g^2_{11} & 0 & 0 & 0 & g^2_{13} \\
    g^2_{21} & g^2_{22} & 0 & 0 & g^2_{23} \\
    0& g^2_{32} & 0 & 0 & 0 \\
    g^2_{41} & g^2_{42} & 0 & 0 & g^2_{43} \\
    g^2_{51} & g^2_{52} & 0 & 0 & 0
\end{pmatrix}
\]
and
\[
B = G \cdot (I-A)^{-1} = \begin{pmatrix}
    a_{21}g^1_{12} + g^1_{11} & g^1_{12} & g^1_{13} & 0 & 0 \\
    g^1_{21} & 0 & g^1_{23} & 0 & 0 \\
    a_{21}g^1_{32}  & g^1_{32} & a_{43}g^1_{34} + g^1_{33} & g^1_{34} & 0 \\
    a_{21}g^1_{42} +  g^1_{41} & g^1_{42} & a_{43}g^1_{44} + g^1_{43} & g^1_{44} & 0 \\
    g^2_{11} & 0 & 0 & 0 & g^2_{13} \\
    a_{21}g^2_{22} + g^2_{21} & g^2_{22} & 0 & 0 & g^2_{23} \\
    a_{21}g^2_{32} & g^2_{32} & 0 & 0 & 0 \\
    a_{21}g^2_{42} + g^2_{41} & g^2_{42} & 0 & 0 & g^2_{43} \\
    a_{21}g^2_{52} + g^2_{51} & g^2_{52} & 0 & 0 & 0
\end{pmatrix}.
\]
\end{example}

\begin{figure}[t]
\centering
    \tikzset{
      every node/.style={circle, inner sep=0.2mm, minimum size=0.6cm, draw, thick, black, fill=white, text=black},
      every path/.style={thick}
    }
    \begin{tikzpicture}[align=center]
    \node[] (3) at (0,1.5) {$3$};
    \node[] (4) at (1.5,1.5) {$4$};
    \node[] (1) at (3,1.5) {$1$};
    \node[] (2) at (4.5,1.5) {$2$};
    \node[] (5) at (6,1.5) {$5$};
    
    \node[fill=lightgray] (ve1) at (0,3) {$v^1_1$};
    \node[fill=lightgray] (ve2) at (1.5,3) {$v^1_2$};
    \node[fill=lightgray] (ve3) at (3,3) {$v^1_3$};
    \node[fill=lightgray] (ve4) at (4.5,3) {$v^1_4$};
    
    \node[fill=lightgray] (vf1) at (2.25,0) {$v^2_1$};
    \node[fill=lightgray] (vf2) at (3.75,0) {$v^2_2$};
    \node[fill=lightgray] (vf3) at (5.25,0) {$v^2_3$};
    \node[fill=lightgray] (vf4) at (6.75,0) {$v^2_4$};
    \node[fill=lightgray] (vf5) at (8.25,0) {$v^2_5$};
      
    \draw[blue] [-latex] (1) edge (ve1);
    \draw[blue] [-latex] (2) edge (ve1);
    \draw[blue] [-latex] (3) edge (ve1);
    
    \draw[blue] [-latex] (1) edge (ve2);
    \draw[blue] [-latex] (3) edge (ve2);
    
    \draw[blue] [-latex] (2) edge (ve3);
    \draw[blue] [-latex] (3) edge (ve3);
    \draw[blue] [-latex] (4) edge (ve3);
    
    \draw[blue] [-latex] (1) edge (ve4);
    \draw[blue] [-latex] (2) edge (ve4);
    \draw[blue] [-latex] (3) edge (ve4);
    \draw[blue] [-latex] (4) edge (ve4);
    
    \draw[blue] [-latex] (1) edge (vf1);
    \draw[blue] [-latex] (5) edge (vf1);
    
    \draw[blue] [-latex] (1) edge (vf2);
    \draw[blue] [-latex] (2) edge (vf2);
    \draw[blue] [-latex] (5) edge (vf2);
    
    \draw[blue] [-latex] (2) edge (vf3);
    
    \draw[blue] [-latex] (1) edge (vf4);
    \draw[blue] [-latex] (2) edge (vf4);
    \draw[blue] [-latex] (5) edge (vf4);

    \draw[blue] [-latex] (1) edge (vf5);
    \draw[blue] [-latex] (2) edge (vf5);

    \draw[red] [-latex] (3) edge (4);
    \draw[red] [-latex] (1) edge (2);
    \end{tikzpicture}
    \caption{A $2$-domain graph with $5$ latent nodes and dimensions of the observed domains given by $|V_1|=d_1 = 4$ and $|V_2|=d_2 = 5$. We denote $V_e = \{v^e_1, \ldots, v^e_{d_e}\}$, that is, the superscript indicates the domain a node belongs to.}
    \label{fig:multi-domain-graph}
\end{figure}
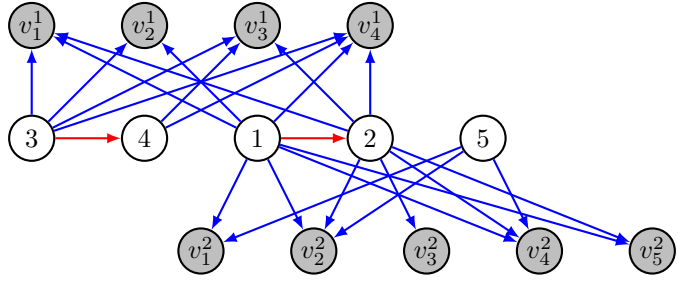

\begin{example} \label{ex:two-pure-children}
Consider the $m$-domain graph in Figure \ref{fig:multi-domain-graph}. The partial pure children of node $1 \in \cL$ are given by $\{v^1_2, v^2_1\}$ and the partial pure children of $2 \in \cL$ are given by $\{v^1_3, v^2_3\}$. Moreover, by continuing Example \ref{ex:matrix-B}, we have that the matrix $B_{\cL}$ is given by
\[
B_{\cL} = \begin{pmatrix}
    a_{21}g^1_{12} + g^1_{11} & g^1_{12} \\
    g^1_{21} & 0  \\
    a_{21}g^1_{32}  & g^1_{32} \\
    a_{21}g^1_{42} +  g^1_{41} & g^1_{42} \\
    g^2_{11} & 0  \\
    a_{21}g^2_{22} + g^2_{21} & g^2_{22} \\
    a_{21}g^2_{32} & g^2_{32} \\
    a_{21}g^2_{42} + g^2_{41} & g^2_{42} \\
    a_{21}g^2_{52} + g^2_{51} & g^2_{52} 
\end{pmatrix}
\]
It is easy to see that the two submatrices
\[
    \begin{pmatrix} 
        g^1_{21} & 0  \\
        g^2_{11} & 0
    \end{pmatrix}
    \quad \text{and} \quad
    \begin{pmatrix}
        a_{21}g^1_{32}  & g^1_{32} \\
        a_{21}g^2_{32} & g^2_{32}
    \end{pmatrix}
\]
have rank one. The first matrix corresponds to the partial pure children $\{v^1_2, v^2_1\}$ in the graph in Figure \ref{fig:multi-domain-graph} while the second matrix correspond to the partial pure children $\{v^1_3, v^2_3\}$. Note that the rank of any other $2 \times 2$ submatrix is generically (i.e., almost surely) equal to $2$.
\end{example}

\section{Discussion of the Assumptions} \label{sec:assumptions-discussion}
In this section, we discuss aspects of Conditions \ref{ass:distributions}-\ref{ass:pure-children} that allow for identifiability. In particular, we discuss the necessity of pairwise different and non-Gaussian error distributions if one is not willing to make further assumptions. Moreover, we elaborate on the sparsity conditions on the mixing matrix and explain why \emph{some} sparsity assumption is necessary.

\textbf{Pairwise Different Error Distributions.}
Given any two potentially different $m$-domain graphs $\mathcal{G}_m=(\mathcal{H} \cup V, D)$ and $\widetilde{\mathcal{G}}_m=(\widetilde{\mathcal{H}} \cup V, \widetilde{D})$, identifiability of the joint distribution in multi-domain causal representation models means that
\begin{equation} \label{eq:identifiability}
    B_{V_e, S_e}\#P_{S_e} = \widetilde{B}_{V_e, \widetilde{S}_e}\#\widetilde{P}_{\widetilde{S}_e} \text{ for all } e \in [m] \implies B\# P = \widetilde{B} \# \widetilde{P}
\end{equation}
for any representation $(B,P)$ of a distribution in $\mathcal{M}(\mathcal{G}_m)$ and any representation $(\widetilde{B},\widetilde{P})$ of a distribution in $\mathcal{M}(\widetilde{\mathcal{G}}_m)$, where the matrices $B_{V_e, S_e}$ and $ \widetilde{B}_{V_e, \widetilde{S}_e}$ have full column rank for all $e \in [m]$. 
The left-hand side says that the marginal distributions in each domain are equal, while the right-hand side says that the joint distributions are equal. If there are $m$-domain graphs, such that the left-hand sides holds but the right-hand side is violated, then we say that the joint distribution is \emph{not identifiable}. 

We assume in this section that the marginal error distributions $P_i, i \in \mathcal{H}$ are non-Gaussian and have unit variance, but are not necessarily pairwise different or non-symmetric.
Then the right-hand side holds if and only if the number of shared latent nodes in each graph is equal, i.e., $\ell = \widetilde{\ell}$, and there is a signed permutation matrix $\Psi$ such that $B= \widetilde{B} \Psi$ and $P=\Psi^{\top} \# \widetilde{P}$. Here, the matrix $\Psi$ does not necessarily have a block structure. The equivalence is implied by the identifiability of the usual, one-domain linear ICA ( see, e.g., \citet{buchholz2022function}) together with the fact that for $\ell \neq \widetilde{\ell}$, we have $|\cH| \neq |\widetilde{\cH}|$ and, therefore, the distributions on the right-hand have support over different dimensional subspaces.

Theorem 3.1 shows that assumptions \ref{ass:distributions} and \ref{ass:full-rank} are sufficient for identifiability of the joint distribution. In particular, we show that they imply identifiability in a stronger sense, namely, that it follows from the left-hand side that $\ell = \widetilde{\ell}$ and 
$B= \widetilde{B} \Psi$ and $P=\Psi^{\top} \# \widetilde{P}$ for a signed permutation $\Psi \in \Pi$ with block structure. 
The next proposition reveals necessary conditions for identifiability.

\begin{proposition} \label{prop:necessary-condition}
    Let $\cG_{m}$ be an $m$-domain graph with shared latent nodes $\cL=[\ell]$, and  let $P_X \in \cM(\cG_{m})$ with representation $(B,P)$. Suppose that $m\geq 2$ and that everything except the assumption about pairwise different error distributions in Conditions \ref{ass:distributions} and \ref{ass:full-rank} is satisfied.
    Then, the joint distribution is not identifiable if one of the following holds:
    \begin{itemize}
    \item[(i)] There is $i,j \in \mathcal{L}$ such that $P_i = P_j$ or $P_i = -P_j$.
    \item[(ii)] There is $i \in \mathcal{L}$ and $j \in I_e$ for some $e \in [m]$ such that $P_i = P_j$ or $P_i = -P_j$.
    \item[(iii)] For all $e \in [m]$ there is $i_e \in I_e$ such that $P_{i_e} = P_{j_f}$ or $P_{i_e} = -P_{i_f}$ for all $e \neq f$. 
    \end{itemize}
\end{proposition}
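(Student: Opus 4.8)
The plan is to prove non-identifiability in each of the three cases by exhibiting a second $m$-domain graph $\widetilde{\cG}_m$ together with a representation $(\widetilde{B},\widetilde{P})$ of a distribution in $\cM(\widetilde{\cG}_m)$ whose per-domain marginals agree with those of $(B,P)$, i.e.\ $B_{V_e,S_e}\#P_{S_e} = \widetilde{B}_{V_e,\widetilde{S}_e}\#\widetilde{P}_{\widetilde{S}_e}$ for all $e\in[m]$, but whose joint distribution differs, $\widetilde{B}\#\widetilde{P} \neq B\#P$. By the characterization recorded just before the proposition (valid under the maintained non-Gaussianity, unit-variance and full-column-rank assumptions), it suffices to arrange that either $\widetilde{\ell}\neq\ell$ or that no signed permutation $\Psi$ satisfies $B=\widetilde{B}\Psi$ and $P=\Psi^{\top}\#\widetilde{P}$. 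I would also invoke the elementary realizability fact that any matrix of the admissible block shape (shared columns stacked on the left, domain-specific columns block-diagonal, no all-zero shared column) lies in $\im(\phi_{\widetilde{\cG}_m})$ for the bipartite graph obtained by setting $\widetilde{A}=0$ and $\widetilde{G}=\widetilde{B}$; this lets me read off a valid $\widetilde{\cG}_m$ from any candidate $\widetilde{B}$.

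For cases (i) and (ii) I would use a single transposition. Writing $T_{ij}$ for the matrix swapping coordinates $i$ and $j$, I set $\widetilde{P}=P$ and let $\widetilde{B}$ agree with $B$ except that in the rows of one fixed domain the columns $i$ and $j$ are interchanged: in case (i) (both $i,j\in\cL$) I swap them in the rows $V_1$, and in case (ii) ($i\in\cL$, $j\in I_e$) in the rows $V_e$. Because $P$ is a product measure with $P_i=P_j$ (the sign case handled by negating the affected column and using $(-1)\#P_k=-P_k$), the coordinate swap leaves the pushforward in the modified domain unchanged while all other domains are untouched, so every per-domain marginal is preserved. To see the joint changes, I would assume $\widetilde{B}=B\Psi$ for a signed permutation $\Psi$ and derive a contradiction: in any unmodified domain $f$ the identity $B_{V_f}(\Psi-I)=0$, together with the fact that $B_{V_f}$ has zero columns outside $S_f$ and linearly independent columns inside $S_f$ by Condition \ref{ass:full-rank}, forces $\Psi$ to fix every index of $S_f=\cL\cup I_f$ with coefficient $+1$. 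Since $m\geq 2$, this pins $\Psi$ to act as the identity on the swapped column $i$ (and also on $j$ in case (i), which lies in $\cL\subseteq S_f$), contradicting the visible swap because Condition \ref{ass:full-rank} also guarantees that the two interchanged columns of $B$ are distinct.

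For case (iii) I would instead change the number of shared nodes. Here every domain carries a domain-specific latent $i_e\in I_e$ sharing a common distribution $P^{\ast}$ (aligning signs as above). I would build $\widetilde{\cG}_m$ in which these $m$ sources are collapsed into a single new shared node $u$: in each domain the column of $u$ is taken to be the original column of $i_e$, the shared block $\cL$ and the remaining domain-specific columns are retained, and $u$ is assigned distribution $P^{\ast}$. Each per-domain marginal is unchanged, since in domain $e$ the columns and error distributions entering the pushforward are exactly those of the original model, merely relabeling $i_e$ as $u$; and $u$ is genuinely shared because each $i_e$ is a latent parent of domain $e$, so its mixing column is nonzero by Condition \ref{ass:full-rank}. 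The resulting model has $\widetilde{\ell}=\ell+1\neq\ell$, so by the characterization its joint cannot coincide with $B\#P$, which is the desired non-identifiability.

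The step I expect to be the main obstacle is the second half of cases (i) and (ii): proving rigorously that a modification which is invisible marginally cannot be undone by any global signed permutation. This is precisely where Condition \ref{ass:full-rank} is indispensable, as it both guarantees that the two swapped columns are distinct (so the modification is nontrivial) and that the unaltered domains rigidly determine $\Psi$ on the relevant indices. The remaining bookkeeping—sign alignment through $(-1)\#$, and checking that each constructed $\widetilde{B}$ really has the admissible block shape and hence lies in some $\im(\phi_{\widetilde{\cG}_m})$—is routine.
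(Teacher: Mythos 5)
Your proposal is correct and follows essentially the same route as the paper's proof: the same column-swap construction on a single domain for cases (i) and (ii), and the same collapse of the $m$ domain-specific nodes into one new shared node (giving $\widetilde{\ell}=\ell+1$) for case (iii). The only cosmetic difference is how the non-existence of the signed permutation $\Psi$ is argued — you pin $\Psi$ down via the unmodified domains using full column rank, while the paper inspects the $2\times 2$ column blocks $B_{V_e\cup V_f,\{i,j\}}$ directly — but these are the same underlying observation.
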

\begin{proof}

For each of the three cases, we will construct another $m$-domain graph $\mathcal{G}_m=(\mathcal{H} \cup V, D)$ such that for suitable representations $(\widetilde{B},\widetilde{P})$ of distributions in $\mathcal{M}(\widetilde{\mathcal{G}}_m)$, the left-hand side of \eqref{eq:identifiability} holds, but the right-hand side is violated.

To prove the statement for case (i), let $i,j \in \cL$ and assume that $P_i = P_j$. We define the $m$-domain graph $\widetilde{\mathcal{G}}_m=(\widetilde{\mathcal{H}} \cup V, \widetilde{D})$ to be the almost same graph as $\mathcal{G}_m=(\mathcal{H} \cup V, D)$, we only ``swap'' the roles of the latent nodes $i$ and $j$ on an arbitrary domain $e \in [m]$. That is, for each $v \in V_e$, if there was an edge $i \rightarrow v$ in $D$, we remove that edge from $\widetilde{D}$ and add the edge $j \rightarrow v$ instead, and vice versa. Otherwise, the graph $\widetilde{\mathcal{G}}_m$ has the same structure as $\mathcal{G}_m$. Now, let $\widetilde{P}=P$ and define a the matrix $\widetilde{B}$ to be the same matrix as $B$, except for the subcolumns $\widetilde{B}_{V_e, i} := B_{V_e, j}$ and $\widetilde{B}_{V_e, j} := B_{V_e, i}$, that is, we swapped $B_{V_e,i}$ and $B_{V_e,j}$. Then the pair $(\widetilde{B},\widetilde{P})$ is a representation of some distribution in $\mathcal{M}(\widetilde{\mathcal{G}}_m)$. Recall from the proof of Theorem \ref{thm:main-theorem} that Condition \ref{ass:full-rank} implies that the matrix $B_{V_e,S_e}$ has full column rank. Since we only swapped columns in $\widetilde{B}_{V_e, \widetilde{S}_e}$, it still has full column rank. Moreover, observe that the left hand side of \eqref{eq:identifiability} is satisfied since $P_i=P_j$, that is, the marginal distributions on the single domains coincide. 

However, now consider another domain $f \in [m]$ and the submatrices 
\[
B_{V_e \cup V_f, \{i,j\}} = \begin{pmatrix}
    B_{V_e,i} & B_{V_e,j} \\
    B_{V_f,i} &  B_{V_f,j}
\end{pmatrix} \quad \text{and} \quad 
\widetilde{B}_{V_e \cup V_f, \{i,j\}} = \begin{pmatrix}
    B_{V_e,j} & B_{V_e,i} \\
    B_{V_f,i} &  B_{V_f,j}
\end{pmatrix}.
\]
Since all of the four subcolumns are  nonzero and neither $B_{V_e,j}$ is equal to $B_{V_e,i}$ nor $B_{V_f,j}$ is equal to $B_{V_f,i}$, there is no signed permutation matrix $\Omega$ such that $B_{V_e \cup V_f, \{i,j\}} = \widetilde{B}_{V_e \cup V_f, \{i,j\}} \Omega$. 
Hence, there is also no larger signed permutation matrix $\Psi$ such that $B = \widetilde{B} \Psi$. We conclude that the right-hand side of \eqref{eq:identifiability} is violated and the joint distribution is not identifiable. Finally, note that the above arguments also hold if $P_i = -P_j$ by adding ``$-$'' signs in appropriate places. 

The proof for case (ii) works with exactly the same construction. That is, for $i \in \cL$ and $j \in I_e$ we swap the roles of $i$ and $j$ on the domain $e$. Then, for any other domain $f \in [m]$, we obtain the submatrices 
\[
B_{V_e \cup V_f, \{i,j\}} = \begin{pmatrix}
    B_{V_e,i} & B_{V_e,j} \\
    B_{V_f,i} & 0
\end{pmatrix} \quad \text{and} \quad 
\widetilde{B}_{V_e \cup V_f, \{i,j\}} = \begin{pmatrix}
    B_{V_e,j} & B_{V_e,i} \\
    B_{V_f,i} & 0
\end{pmatrix}.
\]
By the same arguments as before, this shows that there is no signed permutation matrix $\Psi$ such that $B = \widetilde{B} \Psi$ and, hence, the joint distribution is not identifiable.

To prove case (iii), we consider a slightly different construction. However, we also assume that $P_{i_e}=P_{i_f}$ for all $e \neq f$, since for $P_{i_e}=-P_{i_f}$ we only have to add some ``$-$'' signs in the following. We define the $m$-domain graph $\widetilde{\mathcal{G}}_m=(\widetilde{\mathcal{H}} \cup V, \widetilde{D})$ by identifying the nodes $i_e, e \in [m]$ with a new node $k$. That is, $\widetilde{\cL}=\cL \cup \{k\}$ and $\widetilde{\cH} = (\bigcup_{e \in [m]} I_e \setminus \{i_e\} ) \cup \widetilde{\cL}$. For $i \in \widetilde{\cH} \setminus \{k\}$ and $v \in V$, the edge set $\widetilde{D}$ contains an edge $i \rightarrow v$  if and only if the edge $i \rightarrow v$ is in $D$. For the node $k \in \widetilde{\cH}$ and $v \in V$, we put an edge $k \rightarrow v$ in $\widetilde{D}$ if and only if there is an edge $i_e \rightarrow v$ in $D$ for some $e \in [m]$. 

Now, define the matrix $\widetilde{B}$ such that $\widetilde{B}_{V, \widetilde{\cH} \setminus \{k\}} := B_{V, \widetilde{\cH} \setminus \{k\}}$ and $\widetilde{B}_{V_e,k} := B_{V_e,i_e}$ for all $e \in [m]$. Then the pair $(\widetilde{B},\widetilde{P})$ is a representation of some distribution in $\mathcal{M}(\widetilde{\mathcal{G}}_m)$. Moreover, each submatrix $\widetilde{B}_{V_e, \widetilde{S}_e}$ is equal to $B_{V_e, S_e}$ up to relabeling of the columns. That is, the column that is labeled by $i_e$ in $B_{V_e, S_e}$ is now labeled by $k$ in $\widetilde{B}_{V_e, \widetilde{S}_e}$. We define the measure $\widetilde{P}$ such that $\widetilde{P}_{\widetilde{\cH} \setminus \{k\}} = P_{\widetilde{\cH} \setminus \{k\}}$ and $\widetilde{P}_k = P_{i_e}$ for all $e \in [m]$. Then the pair $(\widetilde{B},\widetilde{P})$ is a representation of some distribution in $\mathcal{M}(\widetilde{\mathcal{G}}_m)$ and, in particular, the left hand side of \eqref{eq:identifiability} is satisfied. That is, the marginal distributions coincide on each domain. However, the number of shared latent variables in both $m$-domain graphs is different since we have $\widetilde{\ell} = \ell + 1$. We conclude that the joint distribution is not identifiable.
\end{proof}
The proposition states that it is in most cases necessary that error distributions are pairwise different. However, in two cases the same error distributions still lead to identifiability. First, if $i,j \in I_e$, then the corresponding error distributions may be the same and the joint distribution is still identifiable. Similarly, if there are latent nodes $i_e$ in a few domains $e \in [m]$ such that the corresponding error distributions coincide, but there is at least one domain $f \in [m]$ where there is no latent node with the same error distribution, then the joint distribution is also identifiable. Both can be seen by taking the the proofs of Theorem \ref{thm:main-theorem} and Proposition \ref{prop:necessary-condition} together.

\textbf{Gaussian Errors.}
Without additional assumptions to those in Section \ref{sec:joint-distribution}, it is impossible to recover the joint distribution  if the distributions of the errors $\eps_i$ of the latent structural equation model in Equation \eqref{eq:linear-sem} are Gaussian. In this case, the distribution of $Z$ as well as the distribution of each observed random vector $X^e$ is determined by the covariance matrix only. The observed covariance matrix in domain $e \in [m]$ is given by $\Sigma^e = G^e \cov[Z_{\cL \cup I_e}] (G^{e})^{\top}$. However, knowing $\Sigma^e$ gives no information about $Z_{\cL \cup I_e}$ other than $\text{rank}(\Sigma^e)= |\cL| + |I_e|$, that is, we cannot distinguish which latent variables are shared and which ones are domain-specific. This is formalized in the following lemma.

\begin{lemma} \label{lem:gaussian}
Let $\Sigma$ be any $d \times d$ symmetric positive semidefinite matrix of rank $p$ and let $\Xi$ be another arbitrary $p \times p$ symmetric positive definite matrix. Then there is $G \in \mathbb{R}^{d \times p}$ such that $\Sigma = G \Xi G^{ \top}$.
\end{lemma}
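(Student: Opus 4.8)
The plan is to reduce the claim to a standard rank-revealing factorization of $\Sigma$ together with a change of variables using the square root of $\Xi$. First I would invoke the spectral theorem to write the symmetric positive semidefinite matrix $\Sigma$ of rank $p$ as $\Sigma = U \Lambda U^{\top}$, where $U \in \mathbb{R}^{d \times p}$ has orthonormal columns spanning the image of $\Sigma$ and $\Lambda \in \mathbb{R}^{p \times p}$ is diagonal with strictly positive diagonal entries (the $p$ nonzero eigenvalues of $\Sigma$). Setting $L := U \Lambda^{1/2} \in \mathbb{R}^{d \times p}$ then yields a factorization $\Sigma = L L^{\top}$ with $L$ of full column rank $p$.

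Next, since $\Xi$ is symmetric positive definite it admits a symmetric positive definite square root $\Xi^{1/2}$, which is invertible with symmetric inverse $\Xi^{-1/2}$. Defining
\[
    G := L \, \Xi^{-1/2} \in \mathbb{R}^{d \times p},
\]
I would verify directly that
\[
    G \, \Xi \, G^{\top} = L \, \Xi^{-1/2} \, \Xi \, \Xi^{-1/2} \, L^{\top} = L \, (\Xi^{-1/2} \Xi^{1/2} \Xi^{1/2} \Xi^{-1/2}) \, L^{\top} = L L^{\top} = \Sigma,
\]
where I use that $\Xi^{-1/2}$ is symmetric so that $(L \Xi^{-1/2})^{\top} = \Xi^{-1/2} L^{\top}$, and that $\Xi^{-1/2} \Xi \Xi^{-1/2} = I$. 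This produces the desired $G$ and completes the argument.

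There is essentially no obstacle here; the statement is a routine linear-algebra fact. The only mild subtlety worth flagging is to keep the dimensions consistent, i.e.\ to use a factorization $\Sigma = L L^{\top}$ with $L \in \mathbb{R}^{d \times p}$ of full column rank rather than a full $d \times d$ square root, so that $G$ has exactly $p$ columns as required by the statement. Everything else follows immediately from the existence of the symmetric square root of a positive definite matrix.
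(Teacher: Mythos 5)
Your proof is correct, and it follows the same underlying strategy as the paper: factor $\Sigma$ as $LL^{\top}$ with $L \in \mathbb{R}^{d \times p}$ of full column rank, then cancel $\Xi$ by composing with an inverse square-root factor. The only difference is the choice of factorizations: you use the spectral theorem for $\Sigma$ and the symmetric square root $\Xi^{-1/2}$, while the paper uses a rank-revealing Cholesky-type decomposition $\Sigma = TT^{\top}$ (keeping the $p$ nonzero columns) and the Cholesky factor $L$ of $\Xi$, setting $G = \widetilde{T}L^{-1}$. The two routes are interchangeable here and neither offers an advantage over the other; your attention to making $L$ a $d \times p$ full-column-rank factor rather than a square root is exactly the point the paper also handles by discarding the zero columns of $T$.
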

\begin{proof}
Let $\Sigma$ be a $d \times d$ symmetric positive semidefinite matrix of rank $p$. Then, $\Sigma$ has a decomposition similar to the Cholesky decomposition; see e.g. \citet[Section 3.2.2]{gentle1998numerical}. That is, there exists a unique matrix $T$, such that $A = TT^{\top}$, where $T$ is a lower triangular matrix with $p$ positive diagonal elements and $d - p$ columns containing all zeros. Define $\widetilde{T}$ to be the $d \times p$ matrix containing only the non-zero columns of $T$. 

On the other hand, let $\Xi$ be a symmetric positive definite $p \times p$  matrix. By the usual Cholesky decomposition \citep[Section 4.2.1]{lyche2020numerical}, there exists a unique $p\times p$ lower triangular matrix $L$ with positive diagonal elements such that $\Xi=LL^{\top}$. Now, define $G:=\widetilde{T}L^{-1} \in \mathbb{R}^{d \times p}$. Then,
\[
    \Sigma = \widetilde{T} \widetilde{T}^{\top} = \widetilde{T} L^{-1} L L^{\top} L^{-\top} \widetilde{T}^{\top} = G \Xi G^{\top}.
    \qedhere
\]
\end{proof}

Due to Lemma \ref{lem:gaussian} it is necessary to consider non-Gaussian distributions to obtain identifiability of the joint distribution.

\begin{example}
In the Gaussian case we cannot distinguish whether the two observed domains in Figure \ref{fig:two-mechanisms} share a latent variable or not. Said differently, the observed marginal distributions may either be generated by the mechanism defined by graph (a) or graph (b) and there is no way to distinguish from observational distributions only.
\end{example}

\begin{figure}[t]
\begin{center}
\hspace{2cm}
 \begin{subfigure}[b]{0.3\textwidth}
    \centering
    \tikzset{
      every node/.style={circle, inner sep=0.2mm, minimum size=0.7cm, draw, thick, black, fill=white, text=black},
      every path/.style={thick}
    }
    \begin{tikzpicture}[align=center]
      \node[] (z1) at (0,1.5) {$1$};
      \node[] (z2) at (1.5,1.5) {$2$};
      \node[] (z3) at (3,1.5) {$3$};
      \node[minimum size=0.9cm,fill=lightgray] (xe) at (0.75,0) {$V_1$};
      \node[minimum size=0.9cm,fill=lightgray] (xk) at (2.25,0) {$V_2$};
     
      \draw[blue, dashed] [-latex] (z1) edge (xe);
      \draw[blue, dashed] [-latex] (z2) edge (xe);
      \draw[blue, dashed] [-latex] (z2) edge (xk);
      \draw[blue, dashed] [-latex] (z3) edge (xk);
    \end{tikzpicture} 
    \caption{}
     \end{subfigure}
     \hfill
     \begin{subfigure}[b]{0.3\textwidth}
    \centering
    \tikzset{
      every node/.style={circle, inner sep=0.2mm, minimum size=0.6cm, draw, thick, black, fill=white, text=black},
      every path/.style={thick}
    }
    \begin{tikzpicture}[align=center]
      \node[] (z1) at (0,1.5) {$1$};
      \node[] (z2) at (1.5,1.5) {$2$};
      \node[] (z3) at (3,1.5) {$3$};
      \node[] (z4) at (4.5,1.5) {$4$};
      \node[minimum size=0.9cm,fill=lightgray] (xe) at (0.75,0) {$V_1$};
      \node[minimum size=0.9cm,fill=lightgray] (xk) at (3.75,0) {$V_2$};

      \draw[blue, dashed] [-latex] (z1) edge (xe);
      \draw[blue, dashed] [-latex] (z2) edge (xe);
      \draw[blue, dashed] [-latex] (z3) edge (xk);
      \draw[blue, dashed] [-latex] (z4) edge (xk);
    \end{tikzpicture}
    \caption{}
    \end{subfigure}
    \hspace{2cm}
    \caption{Compact versions of two 2-domain graphs. In both graphs, both domains have two latent parents. In setup (a) there is a shared latent parent while in setup (b) there is not.}
    \label{fig:two-mechanisms}
\end{center}
\end{figure}
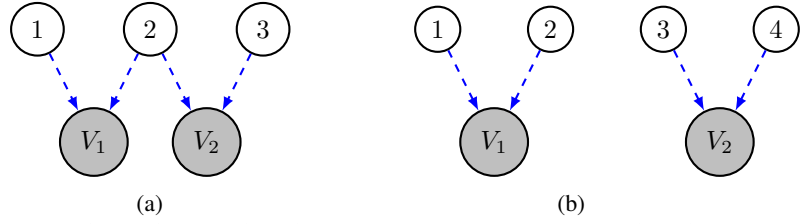

\textbf{Sparsity Assumptions.}
Let $B \in \im(\phi_{\mathcal{G}_m})$ for an $m$-domain graph $\mathcal{G}_m=(\mathcal{H} \cup V, D)$ and suppose that we are given the matrix $B_{\cL}$ = $G_{V, \mathcal{L}} (I-A_{\mathcal{L}, \mathcal{L}})^{-1}$, that is, we are given the submatrix with columns indexed by the shared latent nodes. Now, assume that the graph does not impose any sparsity restrictions on $G_{V, \mathcal{L}}$, which means that the set $\mathbb{R}^{D_{V\cL}}$ of possible matrices $G_{V, \mathcal{L}}$ is equal to $\mathbb{R}^{|V|\times |\cL|}$. 
Then, the set of possible matrices $B_{\mathcal{L}}$ is also unrestricted, that is, $B_{\mathcal{L}}$ can be any matrix in $\mathbb{R}^{|V|\times |\cL|}$ no matter the form of the matrix $A_{\cL,\cL} \in \mathbb{R}^{D_{\cL\cL}}$. In other words, for arbitrary shared latent graphs $\mathcal{G}_{\cL}=(\cL, D_{\cL\cL})$ and arbitrary corresponding parameter matrices $A_{\cL,\cL} \in \mathbb{R}^{D_{\cL\cL}}$, we don't get any restrictions on the matrix $B_{\mathcal{L}}$. Therefore, it is impossible to infer $A_{\cL,\cL}$ from $B_{\mathcal{L}}$.

Condition \ref{ass:pure-children} requires that there are two partial pure children for every shared latent node $k \in \cL$, which implies that there are $2|\cL|$ rows in $G_{V,\cL}$ in which only one entry may be nonzero. While we show in  Theorem \ref{thm:graph-identifiability} that this condition is sufficient for identifiability of $A_{\cL,\cL}$, we leave it open for future work to find a necessary condition.

\section{Algorithms for Finite Samples} \label{sec:empirical-algo}

We adjust Algorithm \ref{alg:id-joint-distr} such that it is applicable in the empirical data setting. That is, rather than the exact distribution $P_{X^e}$, we have a matrix of observations $\bm{X}^e \in \mathbb{R}^{d_e \times n_e}$ in each domain $e \in [m]$. The sample size $n_e$ might be different across domains. We denote $n_{\min}=\min_{e \in [m]} n_e$ and $n_{\max}=\max_{e \in [m]} n_e$.
For implementing linear ICA on finite samples, multiple well developed algorithms are available, e.g., FastICA \citep{hyvarinen1999fast, hyvarinen2000independent}, Kernel ICA \citep{bach2003kernel} or JADE \citep{cardoso1993blind}. Applying them, we obtain a measure $\widehat{P}^e_i$ which is an estimator of the true measure $P^e_i$ in Algorithm \ref{alg:id-joint-distr}, Line \ref{line:linear-ica}.

The remaining challenge is the matching in Line \ref{line:matching} of Algorithm \ref{alg:id-joint-distr}. For finite samples, the distance between empirical distributions is almost surely not zero although the true underlying distributions might be equal. In this section, we provide a matching strategy based on the two-sample Kolmogorov-Smirnov test \citep[Section 3.7]{vaart1996weak}. We match two distributions if they are not significantly different. During this process, there might occur false discoveries, that is, distributions are matched that are actually not the same. We show that the probability of falsely discovering shared nodes shrinks exponentially with the number of domains.  

For two univariate Borel probability measures $P_i, P_j$, with corresponding cumulative distribution functions $F_i, F_j$, the Kolmogorov-Smirnov distance is given by the $L^{\infty}$-distance
\[
    d_{\ks}(P_i, P_j) = \|F_i - F_j\|_{\infty} = \sup_{x \in \mathbb{R}}|F_i(x) - F_j(x)|.
\]
The two-sample Kolmogorov-Smirnov test statistic for the null hypothesis $H_0: d_{\ks}(P^e_i,P^f_j)=0$ is given by 
\begin{equation} \label{eq:test-stat}
    T(\widehat{P}^e_i, \widehat{P}^f_j) = \sqrt{\frac{n_e n_f}{n_e + n_f}} d_{\ks}(\widehat{P}^e_i, \widehat{P}^f_j).
\end{equation}
It is important to note that $\widehat{P}^e_i$ is not an empirical measure in the classical sense since it is not obtained from data sampled directly from the true distribution $P^e_i$. 
In addition to the sampling error there is the uncertainty of the ICA algorithm. However, in the analysis we present here, we will neglect this error and treat $\widehat{P}^e_i$ as an empirical measure. In this case, under $H_0$, the test statistic $T(\widehat{P}^e_i, \widehat{P}^f_j)$ converges in distribution to $\|B\|_{\infty}$, where $B$ is a Brownian bridge from $0$ to $1$ \citep[Section 2.1]{vaart1996weak}. For a given level $\alpha \in (0,1)$, we choose the critical value as $c_{\alpha} = \inf\{t: P(\|B\|_{\infty} > t)\leq \alpha\}$ and reject $H_0$ if $T(\widehat{P}^e_i, \widehat{P}^f_j) > c_{\alpha}$.

\begin{definition} \label{def:matching}
Let $\alpha \in (0,1)$ and suppose the distributions $\{\widehat{P}^e_1, \ldots, \widehat{P}^e_{\hat{s}_e}\}$ and $\{\widehat{P}^f_1, \ldots, \widehat{P}^f_{\hat{s}_f}\}$ are given for two domains $e, f \in [m]$. Define
\[
    \Omega_{\alpha}(\widehat{P}^e_i, \widehat{P}^f_j) = \begin{cases}
        1 & \text{if } T^{ef}_{ij} \leq c_{\alpha} \text{ and } T^{ef}_{ij} = \min\{\min_{k \in [\hat{s}_f]}T^{ef}_{ik}, \min_{k \in [\hat{s}_e]}T^{ef}_{kj}\},\\
        0 & \text{else},
    \end{cases}
\]
where $T^{ef}_{ij} = \min\{T(\widehat{P}^e_i, \widehat{P}^f_j),T(\widehat{P}^e_i, -\widehat{P}^f_j)\}$.
We say that   $\widehat{P}^e_i,$ $\widehat{P}^f_j$ are \emph{matched} if $\Omega_{\alpha}(\widehat{P}^e_i, \widehat{P}^f_j)=1$. \looseness=-1
\end{definition}
Definition \ref{def:matching} essentially states that two measures are matched if the test statistic \eqref{eq:test-stat} is not significantly large and the null hypothesis cannot be rejected. 
Taking the minimum of $T(\widehat{P}^e_i, \widehat{P}^f_j)$ and $T(\widehat{P}^e_i, -\widehat{P}^f_j)$ accounts for the sign indeterminacy of linear ICA.
For two fixed domains $e,f \in [m]$, if it happens that the statistic $T^{ef}_{ij}$ for multiple pairs $(i, j)$ is small enough, then the pair with the minimal value of the statistic is matched. 
Note that one may use any other test than the Kolmogorov-Smirnov test to define a matching as in Definition \ref{def:matching}. We discover a shared latent node if it is matched consistently across domains.

\begin{definition}\label{def:shared-node}
Let $C=(i_1, \ldots, i_m)$ be a tuple with $m$ elements such that $i_e \in [\hat{s}_e]$. Then we say that $C$ determines a \emph{shared node} if $\Omega_{\alpha}(\hat{P}_{i_e}^{e}, \hat{P}_{i_f}^{f})=1$ for all $i_e, i_f \in C$. 
\end{definition}

Inferring the existence of a shared node which does not actually exist may be considered a more serious error than inferring a shared node determined by a set $C$, where only some components of $C$ are wrongly matched. In the following theorem we show that the probability of falsely discovering shared nodes shrinks exponentially with the number of wrongly matched components.

\begin{theorem} \label{thm:false-discovery}
Let $C=(i_1, \ldots, i_m)$ be a tuple with $m$ elements such that $i_e \in [\hat{s}_e]$. Let $g: \mathbb{N} \times \mathbb{R}_{\geq 0} \to \mathbb{R}_{\geq 0}$ be a function that is monotonically decreasing in $n \in \mathbb{N}$ and assume the following:
\begin{itemize} 
    \item[(i)] $P(d_{\ks}(\widehat{P}^e_{i_e},P^e_{i_e}) > x) \leq g(n_e,x)$ for all $e \in [m]$ and for all $x \geq 0$.
    \item[(ii)] There is $E \subseteq [m]$ with $|E|\geq 2$ and a constant $\kappa > 0$ such that $d_{\ks}(P^e_{i_e}, P^f_{i_f}) \geq \kappa$ and $d_{\ks}(P^e_{i_e}, -P^f_{i_f}) \geq \kappa$ for all $e \in E$, $f \in [m]$ with $e \neq f$.
\end{itemize}
Then 
\begin{align*}
   P&(C \text{ determines a shared node}) 
   \leq g\left(n_{\min}, \ \max\left\{\frac{\kappa}{2} - \frac{\sqrt{n_{\max}}}{\sqrt{2} \ n_{\min}}c_{\alpha},0\right\}\right)^{|E|-1}. 
\end{align*}
\end{theorem}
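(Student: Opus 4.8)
The plan is to reduce the matching criterion of Definition~\ref{def:matching} to its threshold part, translate this into a statement about the per-domain empirical estimation errors via the triangle inequality for $d_{\ks}$, and then exploit the independence of these errors across the unpaired domains together with the tail bound in hypothesis~(i).

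First I would observe that if $C$ determines a shared node in the sense of Definition~\ref{def:shared-node}, then in particular every pair is matched, so $\Omega_\alpha=1$ forces $T^{ef}_{i_e i_f}\le c_\alpha$ for all $e\neq f$; the mutual--nearest--neighbour requirement in Definition~\ref{def:matching} only shrinks this event, so it may be dropped for an upper bound. Recalling \eqref{eq:test-stat}, the threshold condition reads $\min\{d_{\ks}(\widehat P^e_{i_e},\widehat P^f_{i_f}),\,d_{\ks}(\widehat P^e_{i_e},-\widehat P^f_{i_f})\}\le c_\alpha\sqrt{(n_e+n_f)/(n_en_f)}$. Writing $d^e:=d_{\ks}(\widehat P^e_{i_e},P^e_{i_e})$ and using $d_{\ks}(-\mu,-\nu)=d_{\ks}(\mu,\nu)$, the triangle inequality applied to whichever of the two terms attains the minimum, combined with the matching separation bound from (ii) (either $d_{\ks}(P^e_{i_e},P^f_{i_f})\ge\kappa$ or $d_{\ks}(P^e_{i_e},-P^f_{i_f})\ge\kappa$), yields $d^e+d^f\ge\kappa-c_\alpha\sqrt{(n_e+n_f)/(n_en_f)}$ for every $e\in E$ and $f\neq e$.

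Next I would make this uniform: since $(n_e+n_f)/(n_en_f)\le 2n_{\max}/n_{\min}^2$, the right--hand side is at least $S:=\kappa-\sqrt2\,c_\alpha\sqrt{n_{\max}}/n_{\min}$, so $\max\{d^e,d^f\}\ge S/2=\tfrac{\kappa}{2}-\tfrac{\sqrt{n_{\max}}}{\sqrt2\,n_{\min}}c_\alpha=:\delta$. Define the bad event $\mathcal{B}_e=\{d^e\ge\delta\}$. I have just shown that, on the event that $C$ determines a shared node, $\mathcal{B}_e\cup\mathcal{B}_f$ occurs for every $e\in E$ and $f\neq e$; restricting to pairs inside $E$ gives a clique constraint, so at most one domain of $E$ can fail to be bad, and hence at least $|E|-1$ of the events $\{\mathcal{B}_e:e\in E\}$ occur. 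By hypothesis~(i) and monotonicity of the tail bound (decreasing in $n$ as assumed, and in $x$ as is automatic for a tail probability), $P(\mathcal{B}_e)\le g(n_e,\delta)\le g(n_{\min},\delta)\le g(n_{\min},\max\{\delta,0\})=:Q$, and across the distinct unpaired domains the events $\mathcal{B}_e$ are mutually independent. Organising these $|E|-1$ independent large--deviation events --- by conditioning on the unique (if any) accurate domain $e_0\in E$ and charging each of the remaining $|E|-1$ separations to a distinct, independent deviation --- produces the factor $Q^{|E|-1}$, which is the claimed bound; the degenerate case $\delta\le0$ is absorbed by the $\max\{\cdot,0\}$ truncation.

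The main obstacle I anticipate is precisely this last, combinatorial--probabilistic, step. The clique argument only guarantees that \emph{at least} $|E|-1$ of the $\mathcal{B}_e$ occur, and a naive union bound over which domain is the accurate one loses a factor of $|E|$; obtaining exactly the $(|E|-1)$-st power therefore requires pinning down a single reference domain in $E$ and assigning the remaining $|E|-1$ forced deviations to distinct, independent domains, rather than summing over all choices of the good domain. The sign indeterminacy --- carrying the $-P$ alternative through the triangle inequality by invoking \emph{both} separation bounds in (ii) --- is the other place where care is needed, though it becomes routine once $d_{\ks}(-\mu,-\nu)=d_{\ks}(\mu,\nu)$ is recorded.
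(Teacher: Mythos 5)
Your proposal follows essentially the same route as the paper's proof: drop the mutual--nearest--neighbour requirement to retain only the threshold events $T^{ef}_{i_e i_f}\le c_\alpha$, combine the triangle inequality for $d_{\ks}$ with the separation in (ii) and the bound $\sqrt{n_e n_f/(n_e+n_f)}\ge n_{\min}/\sqrt{2n_{\max}}$ to conclude that each such pair forces at least one of the two per-domain estimation errors to exceed $\delta=\tfrac{\kappa}{2}-\tfrac{\sqrt{n_{\max}}}{\sqrt{2}\,n_{\min}}c_\alpha$, deduce that at most one domain of $E$ can have small error, and finish with independence of the errors across unpaired domains together with hypothesis (i) and monotonicity of $g$. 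The combinatorial step you single out as the main obstacle is resolved in the paper by exactly the move you gesture at: it names the single exceptional element $e^{\ast}\in E$, bounds the pairwise intersection by $P(\bigcap_{e\in E\setminus\{e^{\ast}\}}\{d_{\ks}(\widehat{P}^e_{i_e},P^e_{i_e})\ge\delta\})$, and factorizes by independence, with no union bound over the choice of $e^{\ast}$. Your unease about this step is warranted and applies equally to the paper's own argument: strictly speaking $e^{\ast}$ is outcome-dependent, and the containment in the event that at least $|E|-1$ of the independent events $\{d_{\ks}(\widehat{P}^e_{i_e},P^e_{i_e})\ge\delta\}$ occur only yields $|E|\cdot Q^{|E|-1}$ after a union bound over which domain is exceptional (for $|E|=2$ and two independent events of probability $Q=1/2$, the probability that at least one occurs is $3/4>Q$), and conditioning on a data-dependent ``accurate'' index does not preserve independence. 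So your sketch reproduces the paper's proof, including its one informal step; a fully rigorous version of either argument delivers the stated bound only up to the extra factor $|E|$.
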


\begin{proof}[Proof of Theorem \ref{thm:false-discovery}]
Let $C=(i_1, \ldots, i_m)$ and $g$ be as in the statement of the theorem. W.l.o.g. we assume that $E=\{1, \ldots, |E|\}$ and that $T(\widehat{P}^e_{i_e}, \widehat{P}^f_{i_f}) \leq T(\widehat{P}^e_{i_e}, -\widehat{P}^f_{i_f})$. Observe that $C$ determines a shared node if and only if 
\[
    \sum_{e < f} \Omega_{\alpha}(\widehat{P}^e_{i_e}, \widehat{P}^f_{i_f}) = \binom{m}{2}.
\]
Now, we have
\begin{align}
    P\Biggl( \sum_{e < f} \Omega_{\alpha}(\widehat{P}^e_{i_e}, \widehat{P}^f_{i_f}) = \binom{m}{2}\Biggr)
    =& P\Biggl( \bigcap_{e < f} \left\{\Omega_{\alpha}(\widehat{P}^e_{i_e}, \widehat{P}^f_{i_f}) = 1 \right\} \Biggr) \nonumber \\
     \leq& P\Biggl( \bigcap_{e < f} \left\{T^{ef}_{ij} \leq c_{\alpha} \right\} \Biggr) \nonumber \\
     =& P\Biggl( \bigcap_{e < f} \left\{T(\widehat{P}^e_{i_e}, \widehat{P}^f_{i_f}) \leq c_{\alpha} \right\} \cup \left\{T(\widehat{P}^e_{i_e}, -\widehat{P}^f_{i_f}) \leq c_{\alpha} \right\} \Biggr) \nonumber \\
    =& P\Biggl( \bigcap_{e < f} \left\{T(\widehat{P}^e_{i_e}, \widehat{P}^f_{i_f}) \leq c_{\alpha} \right\} \Biggr) \nonumber \\
    \leq& P\Biggl( \bigcap_{\substack{e \in E, f \in [m] \\ e < f}} \left\{T(\widehat{P}^e_{i_e}, \widehat{P}^f_{i_f}) \leq c_{\alpha} \right\} \Biggr). \label{eq:intersection}
\end{align}
By the triangle inequality we have 
\begin{align} \label{eq:triangle}
    d_{\ks}(P^e_{i_e}, P^f_{i_f}) \leq d_{\ks}(P^e_{i_e}, \widehat{P}^e_{i_e}) + d_{\ks}(\widehat{P}^e_{i_e}, \widehat{P}^f_{i_f}) + d_{\ks}(\widehat{P}^f_{i_f}, P^f_{i_f}).
\end{align}
Moreover, if $e \in E$ and $f \in [m]$, then we have by Condition (ii) 
\begin{align} \label{eq:assumption-ii}
    d_{\ks}(P^e_{i_e}, P^f_{i_f}) \geq \kappa.
\end{align}
Using \eqref{eq:triangle} and \eqref{eq:assumption-ii} together with the fact  $\sqrt{\frac{n_e n_f}{n_e + n_f}} \geq \frac{n_{\min}}{\sqrt{2 n_{\max}}}$, we obtain the following chain of implications for $e \in E$ and $f \in [m]$:
\begin{align*}
    &T(\widehat{P}^e_{i_e}, \widehat{P}^f_{i_f}) \leq c_{\alpha} \\
    \iff&  \sqrt{\frac{n_e n_f}{n_e + n_f}} d_{\ks}(\widehat{P}^e_{i_e}, \widehat{P}^f_{i_f}) \leq c_{\alpha}  \\
    \implies& \frac{n_{\min}}{\sqrt{2 n_{\max}}} \left( d_{\ks}(P^e_{i_e}, P^f_{i_f}) - d_{\ks}(\widehat{P}^e_{i_e}, P^e_{i_e}) - d_{\ks}(\widehat{P}^f_{i_f}, P^f_{i_f})\right) \leq c_{\alpha}  \\
    \implies& \frac{n_{\min}}{\sqrt{2 n_{\max}}} \left( \kappa - d_{\ks}(\widehat{P}^e_{i_e}, P^e_{i_e}) - d_{\ks}(\widehat{P}^f_{i_f}, P^f_{i_f})\right) \leq c_{\alpha} \\
    \iff&  d_{\ks}(\widehat{P}^e_{i_e}, P^e_{i_e}) +  d_{\ks}(\widehat{P}^f_{i_f}, P^f_{i_f}) \geq \kappa - \frac{\sqrt{2 n_{\max}}}{n_{\min}} c_{\alpha} \\
    \implies&  \left\{d_{\ks}(\widehat{P}^e_{i_e}, P^e_{i_e}) \geq \frac{\kappa}{2} - \frac{\sqrt{n_{\max}}}{\sqrt{2} \ n_{\min}}c_{\alpha} \right\} \text{ or } \left\{d_{\ks}(\widehat{P}^f_{i_f}, P^f_{i_f}) \geq \frac{\kappa}{2} - \frac{\sqrt{n_{\max}}}{\sqrt{2} \ n_{\min}}c_{\alpha} \right\}.
\end{align*}
Now, consider the event $\bigcap_{\substack{e \in E, f \in [m] \\ e < f}} \{T(\widehat{P}^e_{i_e}, \widehat{P}^f_{i_f}) \leq c_{\alpha} \}$. On this event, there cannot be two elements $e, f \in E$ such that both
\[
    \left\{d_{\ks}(\widehat{P}^e_{i_e}, P^e_{i_e}) < \frac{\kappa}{2} - \frac{\sqrt{n_{\max}}}{\sqrt{2} \ n_{\min}}c_{\alpha} \right\} \text{ and } \left\{d_{\ks}(\widehat{P}^f_{i_f}, P^f_{i_f}) < \frac{\kappa}{2} - \frac{\sqrt{n_{\max}}}{\sqrt{2} \ n_{\min}}c_{\alpha} \right\}.
\]
To see this recall that $E \subseteq [m]$. We conclude that it must hold $\{d_{\ks}(\widehat{P}^e_{i_e}, P^e_{i_e}) \geq \frac{\kappa}{2} - \frac{\sqrt{n_{\max}}}{\sqrt{2} \ n_{\min}}c_{\alpha} \}$ for all but at most one element of $E$. We denote this exceptional element by $e^{\ast} \in E$. Taking up \eqref{eq:intersection}, we get the following:
\begin{align}
    &P\Biggl( \sum_{e < f} \Omega_{\alpha}(\widehat{P}^e_{i_e}, \widehat{P}^f_{i_f}) = \binom{m}{2}\Biggr) \nonumber \\
    &\leq P\Biggl( \bigcap_{\substack{e \in E, f \in [m] \\ e < f}} \left\{T(\widehat{P}^e_{i_e}, \widehat{P}^f_{i_f}) \leq c_{\alpha} \right\} \Biggr) \nonumber \\
    &\leq P\Biggl( \bigcap_{\substack{e \in E \setminus \{e^{\ast}\}}}  \left\{d_{\ks}(\widehat{P}^e_{i_e}, P^e_{i_e}) \geq \frac{\kappa}{2} - \frac{\sqrt{n_{\max}}}{\sqrt{2} \ n_{\min}} c_{\alpha} \right\} \Biggr) \nonumber \\ 
    &= \prod_{e \in E \setminus \{e^{\ast}\}} P\Biggl(d_{\ks}(\widehat{P}^e_{i_e}, P^e_{i_e}) \geq \frac{\kappa}{2} - \frac{\sqrt{n_{\max}}}{\sqrt{2} \ n_{\min}} c_{\alpha} \Biggr) \label{eq:step-2}\\
    &= \prod_{e \in E \setminus \{e^{\ast}\}} P\Biggl(d_{\ks}(\widehat{P}^e_{i_e}, P^e_{i_e}) \geq \max\left\{\frac{\kappa}{2} - \frac{\sqrt{n_{\max}}}{\sqrt{2} \ n_{\min}}c_{\alpha}, \ 0 \right\} \Biggr) \label{eq:step-3}\\
    &\leq g\left(n_{\min}, \ \max\left\{\frac{\kappa}{2} - \frac{\sqrt{n_{\max}}}{\sqrt{2} \ n_{\min}}c_{\alpha},0\right\}\right)^{|E|-1}. \label{eq:step-4}
\end{align}
The last three steps need more explanation: Equality \eqref{eq:step-2} follows from the fact that domains are unpaired. That is, the distances $d_{\ks}(\widehat{P}^e_{i_e}, P^e_{i_e})$ and  $d_{\ks}(\widehat{P}^f_{i_f}, P^f_{i_f})$ are pairwise independent for different domains $e,f \in [m]$. Equality \eqref{eq:step-3} is trivial since $d_{\ks}(\widehat{P}^e_{i_e}, P^e_{i_e}) \geq 0$. Finally, Inequality \eqref{eq:step-4} follows from Condition (i) and that the function $g$ is monotonically decreasing in $n$. We also used the fact that $|E \setminus \{e^{\ast}\}| = |E|-1$.
\end{proof}

\begin{figure}[tp]
\begin{algorithm}[H] 
\begin{algorithmic}[1]
    \STATE {\bfseries Hyperparameters:} $\gamma > 0, \, \alpha \in (0,1)$.
    \STATE {\bfseries Input:} Matrix of observations $\bm{X}^e \in \mathbb{R}^{d_e \times n_e}$ for all $e \in [m]$. \alglinelabel{line:input-empirical}
    \STATE {\bfseries Output:} Number of shared latent variables $\widehat{\ell}$, matrix $\widehat{B}$ and probability measure $\widehat{P}$.
    \alglinelabel{line:output-empirical}
    \FOR{$e \in [m]$} 
    \STATE \underline{Linear ICA:} Use any linear ICA algorithm to obtain a mixing matrix $\widetilde{B}^e \in \mathbb{R}^{d_e \times \hat{s}_e}$, where $\hat{s}_e= \rank_{\gamma}(\bm{X}^e (\bm{X}^e)^{\top})$. Compute the matrix $\tilde{\bm{\eta}}^e = (\widetilde{B}^e)^{\dagger} \bm{X}^e \in \mathbb{R}^{\hat{s}_e \times n_e}$, where $(\widetilde{B}^e)^{\dagger}$ is the Moore-Penrose pseudoinverse of $\widetilde{B}^e$. \alglinelabel{line:linear-ica-empirical} 
    \STATE \underline{Scaling:} Let $\Delta^e$ be a $\hat{s}_e \times \hat{s}_e$ diagonal matrix with entries $\Delta^e_{ii}  = \frac{1}{n_e}[\tilde{\bm{\eta}}^e (\tilde{\bm{\eta}}^e)^{\top}]_{ii}$. Define $\widehat{B}^e = \widetilde{B}^e (\Delta^e)^{-1/2}$ and $\bm{\eta}^e = (\Delta^e)^{-1/2}\tilde{\bm{\eta}}^e$. \alglinelabel{line:scaling-empirical}
    \STATE Let $\widehat{P}^e$ be the estimated probability measure with independent marginals such that $\widehat{P}^e_i$ is the empirical measure of the row $\bm{\eta}^e_{i, \ast}$. \alglinelabel{line:def-measure-empirical}
    \ENDFOR
    \STATE \underline{Matching:} Let $\widehat{\ell}$ be the maximal number such that there is a signed permutation matrix $Q^e$ in each domain $e \in [m]$ such that 
    \[
        \Omega_{\alpha_t}([(Q^e)^{\top} \# \widehat{P}^e]_i, [(Q^f)^{\top} \# \widehat{P}^f]_i) = 1
    \]
    for all $i=1, \ldots, \widehat{\ell}$ and all $f \neq e$, where $\alpha_t = \alpha/t$ with $t=2\sum_{e<f}\hat{s}_e\hat{s}_f$.  
    \alglinelabel{line:matching-empirical}
    Let $\widehat{\cL}=\{1, \ldots, \widehat{\ell}\}$. \alglinelabel{line:def-L-empirical}
    \STATE 
    \underline{Construct} the matrix $\widehat{B}$ and the tuple of probability measures $\widehat{P}$ given by
    \[
        \widehat{B} = 
        \begin{pNiceArray}{c|ccc}[parallelize-diags=false]
        [\widehat{B}^1  Q^1]_{\widehat{\cL}}& [\widehat{B}^1  Q^1]_{[\widehat{s}_1] \setminus \widehat{\cL}}  &  &  \\
        \vdots & & \ddots & \\
        [\widehat{B}^m  Q^m]_{\widehat{\cL}} & &  & [\widehat{B}^m  Q^m]_{[\widehat{s}_m] \setminus \widehat{\cL}} 
        \end{pNiceArray}
    \text{ and }
    \widehat{P} = \begin{pmatrix}[(Q^1)^{\top} \# \widehat{P}^1]_{\widehat{\cL}} \\ [(Q^1)^{\top} \# \widehat{P}^1]_{[\widehat{s}_1] \setminus \widehat{\cL}} \\
    \vdots \\
    [(Q^m)^{\top} \# \widehat{P}^m]_{[\widehat{s}_m] \setminus \widehat{\cL}}
    \end{pmatrix}.
    \] \alglinelabel{line:construction-empirical}
    \STATE {\bfseries return} ($\widehat{\ell}$, $\widehat{B}$, $\widehat{P}$). 
\end{algorithmic}
   \caption{IdentifyJointDistributionEmpirical}
   \label{alg:id-joint-distr-empirical}
\end{algorithm}

\begin{algorithm}[H] 
\begin{algorithmic}[1]
    \STATE {\bfseries Hyperparameters:} $\gamma > 0$.
    \STATE {\bfseries Input:} Matrix $B^{\ast} \in \mathbb{R}^{|V| \times \ell}$. 
    \STATE {\bfseries Output:} Parameter matrix $\widehat{A} \in \mathbb{R}^{\ell \times \ell}$.
    \STATE Remove rows $B^{\ast}_{i,\cL}$ with $\|B^{\ast}_{i,\cL}\|_2 \leq \gamma$ from the matrix $B^{\ast}$.
    \STATE Find tuples $(i_k,j_k)_{k \in \cL}$ with the smallest possible scores $\sigma_{\min}(B^{\ast}_{\{i_k, j_k\}, \cL})$ such that
    \begin{itemize}
        \item[(i)] $i_k \neq j_k$ for all $k \in \cL$ and $\{i_k, j_k\} \cap  \{i_q, j_q\} = \emptyset$ for all $k,q \in \cL$ such that $k \neq q$ and
        \item[(ii)] $\sigma_{\min}(B^{\ast}_{\{i_k, i_q\}, \cL})|> \gamma$ for all $k,q \in \cL$ such that $k \neq q$.
    \end{itemize}
    \STATE Let $I=\{i_1, \ldots, i_{\ell}\}$ and consider the matrix $B^{\ast}_{I,\cL} \in \mathbb{R}^{\ell \times \ell}$. 
    \STATE  Find two permutation matrices $R_1$ and $R_2$ such that $W= R_1 B^{\ast}_{I,\cL} R_2$ is as close as possible to lower triangular. This can be measured, for example, by using $\sum_{i < j} W_{ij}^2$.
    \STATE Multiply each column of $W$ by the sign of its corresponding diagonal element. This yields a new matrix $\widetilde{W}$ with all diagonal elements positive. 
    \STATE Divide each row of $\widetilde{W}$ by its corresponding diagonal element. This yields a new matrix $\widetilde{W}^{\prime}$ with all diagonal elements equal to one. 
    \STATE Compute $\widehat{A}=I-(\widetilde{W}^{\prime})^{-1}$.
     \STATE {\bfseries return} $\widehat{A}$.
\end{algorithmic}
   \caption{IdentifySharedGraphEmpirical}
   \label{alg:id-graph-empirical}
\end{algorithm}
\end{figure}

If $\widehat{P}^e_{i}$ were an empirical measure in the classical sense, then Condition (i) in Theorem \ref{thm:false-discovery} translates to the well-known Dvoretzky–Kiefer–Wolfowitz inequality, that is, the function $g$ is given by $g(n,x)=2\exp(-2nx^2)$. 
Given a tuple $C=(i_1, \ldots, i_m)$ that defines a shared node, Condition (ii) is an assumption on the number of wrongly matched components. 
The most extreme case is when the shared node does not actually exist and all components are wrongly matched. That is, the measures $\widehat{P}^e_{i_e}$ and $\widehat{P}^f_{i_f}$ are matched even though $d_{\ks}(P^e_{i_e}, P^f_{i_f})\neq0$ and $d_{\ks}(P^e_{i_e}, -P^f_{i_f})\neq0$ \emph{for all} $e,f \in [m]$.  
On the other hand, if $|E| \ll m$, then $C$ determines a shared node where the majority of the components are correctly matched.  
If $g(n,x) \to 0$ for $n\to \infty$ and $x > 0$, the statement of the theorem  becomes meaningful under the constraint $\sqrt{n_{\max}}/ n_{\min} \to 0$. 
In this case, the probability that a given tuple $C$ with wrong components $E$ determines a shared node goes to zero for large sample sizes $n_{\min}$.
As noted, the probability of falsely discovering a shared node decreases exponentially with the number of wrongly matched components $|E|$. 
In the extreme case, this means that the probability of falsely discovering shared nodes with all components wrongly matched, i.e., $E=[m]$, decreases exponentially with the number of domains $m$.  

Theorem \ref{thm:false-discovery} also tells us that the probability of falsely matching two measures $\widehat{P}^e_i$ and $\widehat{P}^f_j$ becomes zero if the sample size grows to infinity and the linear ICA algorithm is consistent. However, with finite samples we might fail to match two measures where the underlying true measures are actually the same, i.e., we falsely reject the true null hypothesis $H_0$. Thus, we might be overly conservative in detecting shared nodes due to a high family-wise error rate caused by multiple testing. We suggest to correct the level $\alpha$ to account for the amount of tests carried out. One possibility is to apply a Bonferroni-type correction. The total number of tests is given by $t=2\sum_{e<f}\hat{s}_e\hat{s}_f$. This means that an adjusted level is given by $\alpha_t = \alpha/t$ and instead of the critical value $c_{\alpha}$ we consider the adjusted critical value $c_{\alpha_t}$.

Algorithm \ref{alg:id-joint-distr-empirical} is the finite sample version of Algorithm \ref{alg:id-joint-distr} with the matching $\Omega_{\alpha}$ defined in Definition \ref{def:matching}. To determine the number of independent components for the linear ICA step in each domain, we need to check the $\rank(\bm{X}^e (\bm{X}^e)^{\top})$. We specify the rank of a matrix $M$ as number of singular values which are larger than a certain threshold $\gamma$ and denote it by $\rank_{\gamma}(M)$. 

In Algorithm \ref{alg:id-graph-empirical} we also provide a finite sample version of Algorithm \ref{alg:id-graph} where we only have the approximation $B^{\star} = \widehat{B}_{\cL} \approx B_{\cL} \Psi_{\cL}$ for a signed permutation matrix $\Psi_{\cL}$. For a matrix $M$, we denote by $\sigma_{\min}(M)$ the smallest singular value.

\section{Error Distributions in Simulations} \label{sec:error-distributions}
We specify $\cL=\{1,2,3\}$, $I_1 = \{4,5\}$, $I_2 = \{6,7\}$ and $I_3 = \{8,9\}$. Note that the set $I_3$ does not exist if the number of domains is $m=2$. The error distributions in all simulations are specified as follows if not stated otherwise. 
\begin{itemize}
    \item[$\cL$:]
    $\eps_1 \sim \overline{\text{Beta}}(2,3)$, 
    $\eps_2 \sim \overline{\text{Beta}}(2,5)$,
    $\eps_3 \sim \overline{\chi}^2_4$, 
    \item[$I_1$:]
    $\eps_4 \sim \overline{\text{Gumbel}}(0,1)$,
    $\eps_5 \sim  \overline{\text{LogNormal}}(0,1)$,  
    \item[$I_2$:]
    $\eps_6 \sim  \overline{\text{Weibull}}(1,2)$, 
    $\eps_7 \sim  \overline{\text{Exp}}(0.1)$,
    \item[$I_3$:]
    $\eps_8 \sim  \overline{\text{SkewNormal}}(6)$, 
    $\eps_9 \sim  \overline{\text{SkewNormal}}(12)$, 
\end{itemize}
where the overline means that each distribution is standardized to have mean $0$ and variance $1$. Figure~\ref{fig:histograms} shows histograms of the empirical distributions.

\begin{figure}[tb]
\centering
\includegraphics[width=\textwidth]{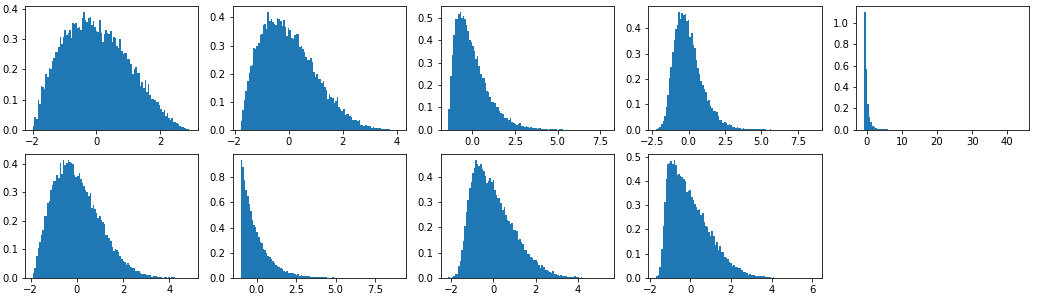}
\caption{Histograms showing the frequency of $25000$ values sampled from the random variables $\eps_i$ with distribution as specified in Appendix \ref{sec:error-distributions}. Each distribution has mean zero and variance one. The first row shows the empirical distributions from $\eps_1$ to $\eps_5$ and the second row from $\eps_6$ to $\eps_9$.}
\label{fig:histograms}
\vspace{0.6cm}
\end{figure}

\section{Additional Simulation Results} \label{sec:additional-simulation-results}
In this section, we make additional experiments. First, we consider another setup where all our assumptions are satisfied but we have more domains and more shared latent variables. Then, we also consider two setups where some of our assumptions are not satisfied.

\textbf{Different Setup.}
We make additional experiments on a similar scale as in Section \ref{sec:simulations}, but with more shared nodes and less domain specific nodes. This time, we consider $\ell=5$ shared latent nodes and $|I_e|=1$ domain-specific latent node in each domain. Moreover, we also consider $m=4$ domains. The dimensions are given by $d_e = d/m$ for all $e \in [m]$ and $d=48$. The graphs and edge weights are sampled equivalently as in Section \ref{sec:simulations} in the main paper. We also consider the same distribution of the error variables as specified in Appendix \ref{sec:error-distributions}, where we specify $\cL=\{1,2,3,4,5\}$, $I_1=\{6\}$, $I_2=\{7\}$, $I_3=\{8\}$ and $I_4=\{9\}$. 

Figure \ref{fig:additional-simulation-results} shows the results where the scores are equivalent as in the main paper. Once again, we see that the estimation error for the matrices $B_{\cL}$ and $A_{\cL,\cL}$ decreases with increasing sample size. This supports our proof of concept and shows that the adapted algorithms are consistent for recovering $B_{\cL}$ and $A_{\cL,\cL}$ from finite samples.

\begin{figure}[tb]
\captionsetup[subfigure]{labelformat=empty}
\centering
\begin{subfigure}{0.325\linewidth}
\centering
\includegraphics[width=\linewidth]{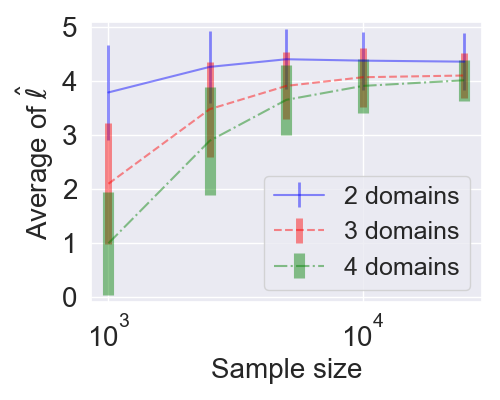}
\caption{(a)}
\end{subfigure}
\hfill
\begin{subfigure}{0.325\linewidth}
\centering
\includegraphics[width=\linewidth]{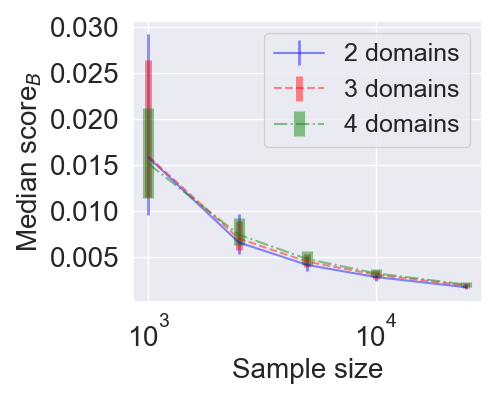}
\caption{(b)}
\end{subfigure}
\hfill
\begin{subfigure}{0.325\linewidth}
\centering
\includegraphics[width=\linewidth]{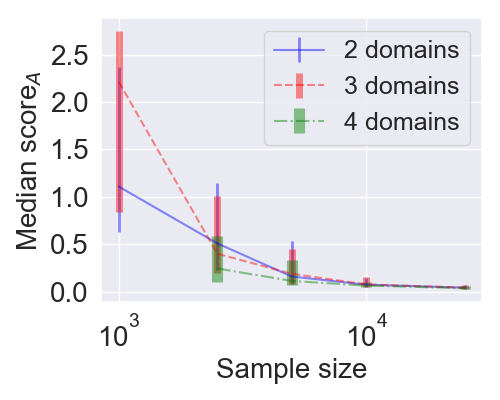}
\caption{(c)}
\end{subfigure}
\caption{\textbf{Simulation results for $\ell=5$ shared latent nodes.} 
Logarithmic scale on the $x$-axis. Error bars in (a) are one standard deviation of the mean and in (b) and (c) they are the interquartile range.}
\label{fig:additional-simulation-results}
\vspace{0.6cm}
\end{figure}

\textbf{Violated Assumptions.} We consider the same setup as in the main paper in Section \ref{sec:simulations} with $l=3$ shared latent nodes, but we fix the number of domains to $m=3$. In this experiment, we compare the results where data was generated such that all our assumptions are satisfied with two setups where we violate some of the assumptions. In the first setup, we violate Condition \ref{ass:distributions} that requires pairwise different error distributions. We specify the error distributions as follows.
\begin{itemize}
    \item[$\cL$:]
    $\eps_1 \sim \overline{\text{Beta}}(2,3)$, 
    $\eps_2 \sim \overline{\text{Beta}}(2,5)$,
    $\eps_3 \sim \overline{\chi}^2_4$, 
    \item[$I_1$:]
    $\eps_4 \sim \overline{\text{Beta}}(2,3)$,
    $\eps_5 \sim  \overline{\text{LogNormal}}(0,1)$,  
    \item[$I_2$:]
    $\eps_6 \sim  \overline{\text{Beta}}(2,5)$, 
    $\eps_7 \sim  \overline{\text{LogNormal}}(0,1)$,
    \item[$I_3$:]
    $\eps_8 \sim  \overline{\chi}^2_4$, 
    $\eps_9 \sim  \overline{\text{LogNormal}}(0,1)$, 
\end{itemize}
where, as before, the overline means that each distribution is standardized to have mean $0$ and variance $1$. In the second setup, we do not change the error distributions but we violate Condition \ref{ass:pure-children} that requires two partial pure children per shared latent node. In this experiment, we do not make any sparsity assumptions on the mixing matrix $G_{V, \cL}$.

Figure \ref{fig:simulations-assumptions-violated} shows the results of our experiments. As expected, we see in (a) and (b) that identifyability of the joint distribution fails if we do not require pairwise different error distributions. Recovering the joint distribution still works well in the second setup where we violate the partial pure children conditions. However, identifying the shared latent graph is impossible in this setup, as we explained in Appendix \ref{sec:assumptions-discussion}. This is supported by the experimental results displayed in Figure \ref{fig:simulations-assumptions-violated} (c), where we can see that recovery of the shared latent graph does not work when the partial pure children assumption is not satisfied.

\begin{figure}[tb]
\captionsetup[subfigure]{labelformat=empty}
\centering
\begin{subfigure}{0.325\linewidth}
\centering
\includegraphics[width=\linewidth]{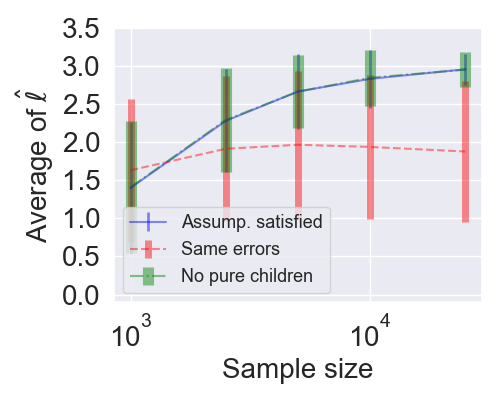}
\caption{(a)}
\end{subfigure}
\hfill
\begin{subfigure}{0.325\linewidth}
\centering
\includegraphics[width=\linewidth]{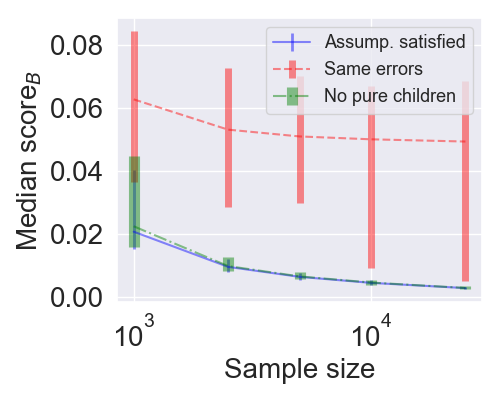}
\caption{(b)}
\end{subfigure}
\hfill
\begin{subfigure}{0.325\linewidth}
\centering
\includegraphics[width=\linewidth]{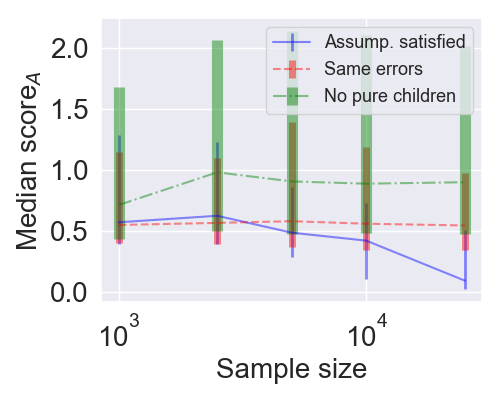}
\caption{(c)}
\end{subfigure}
\caption{\textbf{Simulation results where assumptions are not satisfied.} 
Logarithmic scale on the $x$-axis. Error bars in (a) are one standard deviation of the mean and in (b) and (c) they are the interquartile range.}
\label{fig:simulations-assumptions-violated}
\vspace{0.6cm}
\end{figure}

\section{Future Work} \label{sec:future-work}
We see many directions for future work, which include the following.
\begin{itemize}
    \item Our work and algorithms rely on linear ICA. It would be interesting to study a more direct approach to recover the joint distribution and the causal graph. This might potentially be done by testing certain constraints implied by the model similar as the developments in the LiNGAM literature; see e.g. \citet{shimizu2011directlingam} and \citet{wang2020highdimensional}.
    \item Our results require non-Gaussianity and that both the latent structural equation model and the mixing functions are linear. We consider the linear setup as a basis for any subsequent study of nonlinear cases. For example, recent advances in non-linear ICA allow identifiability of up to linear transformations, see e.g. \citet{khemakem2020variational},  \citet{buchholz2022function} and \citet{roeder2021on-linear}. Identifiability of a causal representation might then be obtained from identifiability results for the linear model.
    \item Our sufficient condition for identifiability of the shared latent graph requires two partial pure children per shared latent node. In this regard, it would also be interesting to study necessary conditions; c.f. our discussion in Appendix \ref{sec:assumptions-discussion}.
    \item  This work focused purely on the observational case. However, considering interventional data can be expected to permit relaxing some conditions in both of the key steps, i.e., recovering the joint distribution and the shared latent graph. For example, recent work shows that interventional data allow for identification of the latent graph without sparsity constraints in a single-domain setup \citep{seigal2023linear, ahuja2023interventional}. Extending this to the multi-domain setup is an interesting problem for future work. 
    \item It would be interesting to study the statistical properties of our setup such as theoretical bounds on the accuracy of recovering the matrices $B$ and $A_{\cL,\cL}$ as well as developing algorithms that meet these bounds. Currently, our adapted algorithms for  finite samples determine the rank of a matrix by using a threshold for singular values. The algorithms depend on the choice of this parameter and it would be worth studying optimal choices. Moreover, one might consider different methods for determining the rank of a matrix.
    \item There might be different matching strategies of the estimated error distributions in the finite sample setting. For example, instead of matching pairwise consistently across all domains as we propose in Appendix \ref{sec:empirical-algo}, one might find an optimal matching by solving a linear program.
\end{itemize}

\end{document}